%% file: example_paper.tex
\documentclass{article}

\usepackage{microtype}
\usepackage{graphicx}
\usepackage{subcaption}
\usepackage{booktabs} %
\usepackage{longtable}

\usepackage{hyperref}

\usepackage[preprint]{icml2026}

\usepackage{amsmath}
\usepackage{amssymb}
\usepackage{mathtools}
\usepackage{amsthm}

\input{macros}

\usepackage{enumitem}
\setlist[itemize]{topsep=0pt, parsep=0pt, leftmargin=1em}

\usepackage[capitalize,noabbrev]{cleveref}

\theoremstyle{plain}
\newtheorem{theorem}{Theorem}[section]

\theoremstyle{definition}

\theoremstyle{remark}

\usepackage[textsize=tiny]{todonotes}

\icmltitlerunning{Self-Improving World Modelling with Latent Actions}

\begin{document}

\twocolumn[
  \icmltitle{Self-Improving World Modelling with Latent Actions}

  \icmlsetsymbol{equal}{*}

  \begin{icmlauthorlist}
    \icmlauthor{Yifu Qiu}{edinburgh}
    \icmlauthor{Zheng Zhao}{edinburgh}
    \icmlauthor{Weixian Waylon Li }{edinburgh}
    \icmlauthor{Yftah Ziser}{nvidia,groningen} \\
    \icmlauthor{Anna Korhonen}{cambridge}
    \icmlauthor{Shay B. Cohen}{edinburgh}
    \icmlauthor{Edoardo M. Ponti}{edinburgh}
  \end{icmlauthorlist}

  \icmlaffiliation{edinburgh}{University of Edinburgh}
  \icmlaffiliation{nvidia}{Nvidia Research}
  \icmlaffiliation{groningen}{University of Groningen}
  \icmlaffiliation{cambridge}{University of Cambridge}

  \icmlcorrespondingauthor{Yifu Qiu}{yifu.qiu@ed.ac.uk}

  \icmlkeywords{Machine Learning, ICML}

  \vskip 0.3in
]

\printAffiliationsAndNotice{}  %

\begin{abstract}
Internal modelling of the world---predicting transitions between previous states $X$ and next states $Y$ under actions $Z$---is essential to reasoning and planning for LLMs and VLMs. 
Learning such models typically requires costly action-labelled trajectories. We propose \ourapproach, a self-improvement framework that learns from state-only sequences by treating actions as a latent variable and alternating between Forward World Modelling (FWM) $P_\theta(Y|X,Z)$ and an Inverse Dynamics Modelling (IDM) $Q_\phi(Z|X,Y)$. \ourapproach iterates two phases: (1) Variational Information Maximisation, which updates the FWM to generate next states that maximise conditional mutual information with latent actions given prior states, encouraging identifiable consistency; and (2) ELBO Maximisation, which updates the IDM to explain observed transitions, effectively performing coordinate ascent. Both models are trained with reinforcement learning (specifically, GRPO) with the opposite frozen model's log-probability as a reward signal. We provide theoretical learnability guarantees for both updates, and evaluate \ourapproach on LLMs and VLMs across multiple environments: single-turn and multi-turn open-world visual dynamics and synthetic textual environments for physics, web, and tool calling. \ourapproach achieves gains of 16\% on \textsc{Aurora-Bench}, 28\% on ByteMorph, 16\% on \textsc{World\-Pre\-dic\-tion\-Bench}, and 14\% on \textsc{Stable\-Tool\-Bench}.\footnote{The code and models developed in this paper will be made available at \url{https://github.com/yfqiu-nlp/swirl}.}
\end{abstract}

\begin{figure*}[t]
    \centering
    \includegraphics[width=\linewidth]{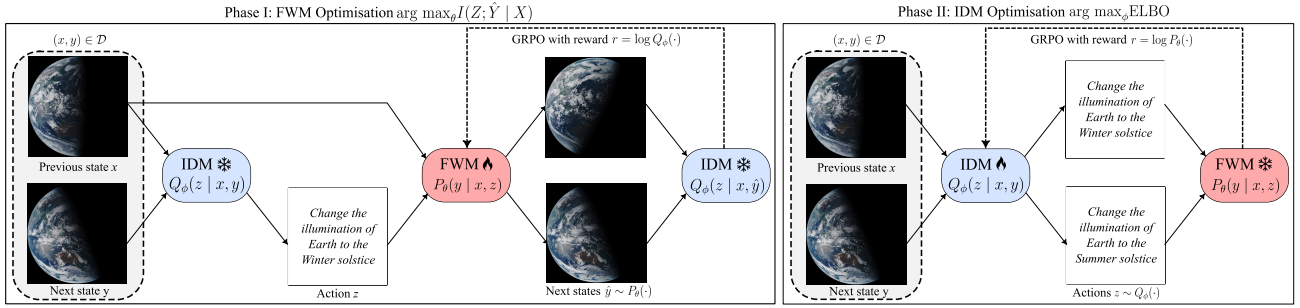}
    \caption{In \ourapproach (Self-improving World modelling with Iterative RL), we facilitate the world modelling ability of foundation models (LLMs and VLMs) by modelling two components: Forward World Model (FWM) $P_\theta(y \mid x, z)$ and Inverse Dynamics Model (IDM) $Q_\phi(z \mid x, y)$. These components are iteratively optimised through RL (specifically, GRPO) in two distinct phases: I) the FDM acts as a policy and the IDM as a reward to ensure identifiability between actions and next states; II) the IDM acts as a policy and the FDM as a reward to ensure data fidelity to the state-only sequences. The KL term is omitted from the figure for simplicity.}
    \label{fig:lawm}
    \vspace{-2mm}
\end{figure*}

\section{Introduction}

\textit{Intrinsic world modelling} is a model’s latent understanding of the environment and agent dynamics, i.e., trajectories of states and actions, which enables simulating possible futures without hallucinations.
Large foundation models, such as Large Language Models (LLMs) and Vision-Language Models (VLMs), 
have arguably been shown to internalise world modelling to some extent during their training, thus enabling reasoning and planning without an external, specialised world model \citep{vafa2024evaluatingWorldModel,qiu2024temporallyGrounded,liu2025LLMWorldModellingPlanning,chen2025planningreasoningVLMWorldModel,wangvagen,xiong2026unidrive}. 
For example, explicitly training models to predict the results of invoking tools \cite{guo2025WebAgentWorldModel} or executing code \citep{copet2025cwm} significantly enhances tool calling and coding tasks. This ability can also transfer to spatial reasoning tasks \citep{qiu2025bootstrapping,tehenan2025linearSpatialWorldModel} and reduce hallucinations that contradict external regularities \citep{liu2025unifiedDefinitionofHallucination,chen2025planningreasoningVLMWorldModel}.

While promising, the development of robust internal world models faces a significant bottleneck in data scalability. Current approaches rely heavily on execution logs or trajectories where observations are densely annotated with specific actions, because such annotations are naturally available in restricted environments (e.g., logs for tool calling or coding). However, for open-world tasks, collecting manual annotations for every transition is prohibitively expensive and intractable. Moreover, an additional challenge is the inherent ambiguity of \textit{inverse dynamics}: a transition between two states may be explained by multiple valid actions, making purely supervised learning brittle when data is sparse.

Inspired by recent advances in self-improving learning for other applications \citep{wang2025selfImprovingRLAgent,wang2025cure,jin2025srum,mao2025unirl,jin2025srum}, we propose \ourapproach (Self-improving World modelling with Iterative RL), a reciprocal optimisation framework to enhance the intrinsic world modelling of LLMs and VLMs from state-only sequences where the intermediate action is latent. We formalise world modelling as two components \citep{wang2025enact,qiu2025bootstrapping,chen2025worldpredictionbench}: \textit{Forward World Modelling} (FWM; predicting the next state $y$ given current state and latent action $x, z$), and \textit{Inverse Dynamics Model} (IDM; inferring latent action $z$ given states $x, y$). 
In our reciprocal framework, we optimise the FWM $P_\theta$ to generate futures that are consistently identifiable by the IDM, and the IDM $Q_\phi$ to infer actions that maximise the likelihood of the state dynamics predicted by the FWM. From a variational inference perspective, we theoretically prove that the optimisation of FWM is equivalent to maximising a lower bound on the Conditional Mutual Information $I(Z; \hat{Y} | X)$ \citep{Barber2003variationalMutualInformation} between the latent $Z$ and future states $\hat{Y}$ predicted by the FWM; 
and that the optimisation of the IDM is equivalent to performing coordinate ascent on the Evidence Lower Bound (ELBO) of the log-likelihood $\log P_\theta(Y|X)$. Without relying on ground-truth action annotations, we optimise such a reciprocal framework with Group Relative Policy Optimisation (GRPO; \citealt{guo2025deepseek}). FWM and IDM take alternating roles of policy and reward, iterating until both components converge. 

We validate our framework across four distinct environments on six benchmarks: open-world visual dynamics for VLMs on \aurorabench, \bytemorph and \worldpredictionbench, synthetic textual worlds on \scienceworld, web HTML on \mindtoweb, and tool calling on \stabletoolbench for LLMs. Empirical results confirm that our reciprocal self-improving allows models to learn effective dynamics from unlabelled state sequences, outperforming supervised fine-tuning baselines and achieving parity with larger, state-of-the-art models. Our contributions are summarised as follows:
\begin{itemize}
  \item We propose a novel self-improving framework for world modelling in LLMs and VLMs, reciprocally reinforcing FWM and IDM without action annotations.
  \item We provide a rigorous theoretical proof that \ourapproach corresponds to alternating between maximising Variational Mutual Information and Evidence Lower Bound.
  \item Empirical evidence on six benchmarks across visual, textual, web, tool calling environments demonstrates that \ourapproach models effectively self-improve, leading to more enhanced forward world modelling capabilities.
\end{itemize}

\section{Related Work}

\paragraph{Intrinsic World Models.} 
Recent research has explored whether internalised world models emerge in LLMs and VLMs through careful evaluation. 
\citet{vafa2024evaluatingWorldModel} assessed whether the models' representations truly capture coherent world dynamics \citep{xiong2026unidrive}. Similarly, 
\citet{tehenan2025linearSpatialWorldModel} and \citet{qiu2024temporallyGrounded} found evidence that LLMs implicitly encode spatial and temporal relationships to some degree.
World modelling also emerges naturally during the pre-training of large unified VLMs \citep{deng2025emerging, cui2025emu3.5} and from video-based training \citep{chen2025planningreasoningVLMWorldModel, qiu2025bootstrapping}.

In addition, previous work established that explicit world modelling can boost performance in downstream applications. For instance,
\citet{copet2025cwm} proposed the {Coding World Model} for programming, and \citet{lehrach2025codeWorldModelForGamePlaying} extended this approach to game-playing environments. 
Modelling the outcomes of function tool calls \citep{guo2025WebAgentWorldModel} or planning \citep{li2025wordtoworld} had similar effects.  
As a result, dedicated post-training pipelines have been proposed to endow LLMs \citep{xie2025makingLMintoWorldModel} and VLMs \citep{xiang2024pandora} with explicit world modelling capabilities. 
Extensive benchmarks for evaluation have been proposed, including forward and inverse dynamics prediction \citep{chen2025worldpredictionbench, wang2025enact, gao2025EvaluateVLMInternatlWorldModel}.

\paragraph{Self-Improving Learning.} 
Self-improving learning refers to models learning from their own (possibly curated) signals without external supervision. 
\citet{huang2023llmselfimprove} showed that LLMs can generate high-confidence answers and fine-tune themselves on these outputs to improve reasoning. 
\citet{bensal2025reflectRetryReward} proposed a self-reflection and reinforcement learning loop where models analyse mistakes and retry, boosting performance on tasks like maths and function calling. 
\citet{leeself} introduce a curriculum in which models iteratively generate and filter correct answers, progressively tackling harder problems, while \citet{zhao2024self} demonstrate that self-synthesised input–output pairs improve classification and generation quality. 
\citep{wang2025selfImprovingRLAgent} enhance self-improvement capabilities of agents with a skill library. 
LLM's coding and unit test generation capabilities can also co-evolve 
iterating on each other's outcomes  \citep{wang2025cure}. Similarly, VLMs can refine visual and language reasoning using self-generated corrections \citep{he2025self}. 

In unified VLMs \citep{deng2025emerging,wu2024liquid,lin2025uniworld,xiao2025omnigen}, understanding performance often exceeds generation \citep{shi2025realunify,ma2025unitok,qu2025tokenflow,zheng2025architectureDecoupling,zhang2025unified}. 
A common strategy is then to use the understanding head as a critic to guide generation with carefully designed rubrics or heuristics \citep{mao2025unirl,jin2025srum,qiu2026unifiedGenerationSelfVerification}. 
Unlike these approaches, \ourapproach theoretically and empirically proves the effectiveness of utilising the predicted likelihoods from understanding (action prediction) and generation (next state prediction) to establish a reciprocal cycle for VLMs where improvements in the generation head also enhance the understanding head, and vice versa.

\section{Methodology}

\begin{algorithm}[tb]
   \caption{\ourapproach}
   \label{algo:ourapproach}
\begin{algorithmic}[1]
   \STATE {\bfseries Input:} Unlabelled dataset $\mathcal{D} = \{ (x_i, y_i) \}$
   \STATE {\bfseries Initialise:} \colorbox{badred}{FWM $P_\theta$}, \colorbox{deepmindblue}{IDM $Q_\phi$}
   \STATE {\bfseries Hyperparams.:} Group size $G$, Learning rates $\eta_\theta, \eta_\phi$

   \REPEAT
      \STATE \textcolor{gray}{=== Phase I: optimise FWM ===}
      \STATE \textbf{Freeze} IDM parameters $\phi$.
      \FOR{each batch $x \sim \mathcal{D}$}
         \STATE Sample latent action $z \sim$ \colorbox{deepmindblue}{$Q_\phi(z|x,y)$} for batch.
         \STATE Generate $G$ rollouts: $\{\hat{y}_1, \dots, \hat{y}_G\} \sim$\colorbox{badred}{$P_\theta(\cdot | x, z)$}.
         \FOR{$k = 1$ to $G$}
            \STATE Compute reciprocal reward via frozen IDM:
            \STATE $R_k \leftarrow \log$ \colorbox{deepmindblue}{$Q_\phi(z | x, \hat{y}_k)$}
         \ENDFOR
         \STATE Compute Advantage $A_k^{\text{F}}$ on FWM's rollouts.
         \STATE Update $\theta$:
         \STATE $\theta \leftarrow \theta + \eta_\theta \nabla_\theta \bigl[ \frac{1}{G} \sum_{k=1}^G A_k^{\text{F}} \log$\colorbox{badred}{$P_\theta(\hat{y}_k | x, z) \bigr]$}
      \ENDFOR

      \STATE \textcolor{gray}{=== Phase II: optimise IDM ===}
      \STATE \textbf{Freeze} FWM parameters $\theta$.
      \FOR{each batch $(x, y) \sim \mathcal{D}$}
         \STATE Sample $G$ actions: $\{z_1, \dots, z_G\} \sim$ \colorbox{deepmindblue}{$Q_\phi(\cdot | x, y)$}.
         \FOR{$k = 1$ to $G$}
            \STATE Compute reciprocal reward via frozen FWM:
            \STATE $R_k \leftarrow \log$\colorbox{badred}{$ P_\theta(y | x, z_k)$}
         \ENDFOR
         \STATE Compute Advantage $A_k^{\text{I}}$ on IDM's rollouts.
         \STATE Update $\phi$ to maximise:
         \STATE $\phi \leftarrow \phi + \eta_\phi \nabla_\phi \bigl[ \frac{1}{G} \sum_{k=1}^G A_k^{\text{I}} \log$ \colorbox{deepmindblue}{$Q_\phi(z_k | x, y)\bigr]$}$^\dagger$
      \ENDFOR

   \UNTIL{Convergence or Max Iterations}
   \STATE {\bfseries Return} optimised FWM $\theta^*$ and IDM $\phi^*$
   \STATE [$^\dagger$\scriptsize{The KL term is omitted in the GRPO objective here for brevity's sake.}]
\end{algorithmic}
\vspace{-.5cm}
\end{algorithm}

\subsection{Task Formulation}

We consider transitions from a source state $x \in \mathcal{S}$ to a target state $y \in \mathcal{S}$, mediated by a latent action $z \in \mathcal{A}$. 
Following \citet{qiu2025bootstrapping} and \citet{wang2025enact}, we parametrise world modelling using two components: i) \textit{Forward World Modelling (FWM)}: $P_\theta(y | x, z)$, which predicts the next state given previous state and action; and ii) \textit{Inverse-Dynamics Prediction (IDM):} $Q_\phi(z | x, y)$, which infers the action given the state transition. The parameters $\theta$ and $\phi$ can be either disjoint or shared.

We consider four classes of environments with different observation–action formulations.
In real-world visual environments, observations are pixel-level visual inputs and actions are specified in natural language. We study this setting using unified vision–language models (VLMs) capable of perceiving and generating interleaved image–text sequences.
In synthetic textual environments, both observations and actions are expressed purely in language and are governed by an underlying simulator, which we model using large language models (LLMs).
In web-based environments, states and actions correspond to raw HTML and interaction logs, respectively, while in tool-use settings, actions are tool calls and observations consist of the conversational context and tool execution outcomes.

\subsection{\ourapproach}
\label{sec:method}

\paragraph{Intuition.} \ourapproach alternates between two tightly coupled optimisation phases. In the first phase, we optimise the Forward World Model (FWM), $P_\theta$, to generate future states that are identifiable by the Inverse Dynamics Model (IDM). This enforces \textbf{identifiability} in forward prediction. In Eq.~\ref{eq:variational_bound} below, we prove that this objective is equivalent to maximising a variational lower bound on the conditional mutual information $I(Z; \hat{Y} | X)$ \citep{Barber2003variationalMutualInformation}
, encouraging the predicted futures $\hat{Y}$ to retain maximal information about the underlying latent actions $Z$.

In the second phase, we optimise the IDM, $Q_\phi$, to improve \textbf{data fidelity} by inferring actions that best explain the observed state transitions under the learned FWM. We formally justify this step by proving that the resulting objective corresponds to maximising the Evidence Lower Bound (ELBO) of the inverse model (see Eq.~\ref{eqt:idp_elbo_lowerbound} below).

\paragraph{Objectives.} As depicted in Algorithm~\ref{algo:ourapproach}, we rely only on state-only sequences $(x_t, y_{t+1})$ as data. In the first phase of each iteration of \ourapproach, we optimise FWM as a policy based on the frozen IDM rewards. Let $P_\theta$ denote the FWM policy parametrised by $\theta$. For each training step in GRPO \citep{guo2025deepseek}, we first sample a latent action $z_t \sim Q_\phi(\cdot \mid x_t, y_t)$ from the IDM. Then, we sample a group of $G$ rollouts $\{\hat{y}_{t+1}^{(k)}\}_{k=1}^G$ from $P_\theta(\cdot \mid x_t, z_t)$. 
To evaluate their quality, we use a frozen IDM model $Q_{\phi}$ prompted for the IDM task, which estimates the likelihood of the action $z_t$ given the generated transition:
$
P_{\pi_{\phi}}(z_t \mid x_t, \hat{y}_{t+1}^{(k)}).
$
The reward for the $k$-th rollout is defined as
\[
r_k = \log Q_{\phi}(z_t \mid x_t, \hat{y}_{t+1}^{(k)}).
\]
This reward enforces that a predicted future is considered plausible only if IDM can retrospectively explain the action that led to the predicted state. 
We then optimise $P_\theta$ to maximise the advantage induced by these rewards.

After the FWM policy converges, we invert the optimisation direction to refine the IDM. 
In this phase, the optimised FWM model is frozen and used as the reward signal. 
The policy $Q_\phi$ is prompted for the IDM task, generating action candidates $\{\hat{z}_t^{(i)}\}_{i=1}^G$ given a state transition $(x_t, y_{t+1})$. 
Each candidate action is evaluated by querying the FWM for the likelihood of the target observation:
\[
r_k = \log P_{\pi_{\theta}}(y_{t+1} \mid x_t, \hat{z}_t^{(k)}).
\]
This reward favours data fidelity with respect to the state transition.
We alternate between FWM and IDM optimisation phases, swapping the roles of policy and reward model until convergence. 
This reciprocal training paradigm ensures that improvements in forward prediction directly enhance action inference by the inverse dynamic model, and vice versa, driving the system toward a globally consistent both given all unlabelled observations.

\subsection{Theoretical Analysis}
\label{sec:theory}

Phase I encourages that generated futures are \textit{distinguishable} (high mutual information), allowing the FWM to produce distinct outcomes for varied latent actions. Phase II encourages that inferred actions are \textit{plausible} (high likelihood), allowing the IDM to map transitions to actions that the FWM can reproduce.

\paragraph{Phase I: FWM optimisation via Variational Information Maximisation.}
In this phase, we freeze $\phi$ and update $\theta$ to generate next states $\hat{y}$ that are identifiable by the inference model. We formalise this as maximising the Conditional Mutual Information (CMI) $I(Z; \hat{Y} | X)$ with respect to the \textit{empirical belief distribution} of the model. Let $\tilde{P}(z|x) \triangleq \mathbb{E}_{y \sim \mathcal{D}(y|x)} [Q_\phi(z|x,y)]$ denote the marginal distribution of latent actions inferred by the IDM over the dataset.

\begin{theorem}[FWM Lower Bound]
\label{thm:fdp_bound}
Optimising the FWM to maximise the log-probability assigned to generated samples by the frozen IDM maximises a variational lower bound on the Conditional Mutual Information $I_{\tilde{P}}(Z; \hat{Y} | X)$ defined over the empirical belief distribution $\tilde{P}(z|x)$.
\end{theorem}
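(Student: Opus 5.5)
We prove this with the standard Barber--Agakov variational bound applied conditionally on $x$. First we fix the joint distribution under which the mutual information is taken. Sample $x\sim\mathcal{D}(x)$, then $z\sim\tilde P(z|x)$. Operationally this is the same as drawing $y\sim\mathcal{D}(y|x)$ and then $z\sim Q_\phi(z|x,y)$, which is exactly what Algorithm~\ref{algo:ourapproach} does in Phase I. Finally draw $\hat y\sim P_\theta(\hat y|x,z)$. This yields a joint $p_\theta(x,z,\hat y)=\mathcal{D}(x)\,\tilde P(z|x)\,P_\theta(\hat y|x,z)$. Its true posterior over actions is
\[
p_\theta(z|x,\hat y)=\frac{\tilde P(z|x)P_\theta(\hat y|x,z)}{\sum_{z'}\tilde P(z'|x)P_\theta(\hat y|x,z')}.
\]
Under this joint,
\[
I_{\tilde P}(Z;\hat Y|X)=H_{\tilde P}(Z|X)-H_\theta(Z|X,\hat Y).
\]
The first term depends only on $\mathcal{D}$ and the frozen $\phi$, so it is constant in $\theta$.

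Next we bound the conditional entropy. We write
\[
-H_\theta(Z|X,\hat Y)=\mathbb{E}_{p_\theta}\big[\log p_\theta(z|x,\hat y)\big]
\]
and add and subtract $\log Q_\phi(z|x,\hat y)$. This gives
\begin{equation*}
-H_\theta(Z|X,\hat Y)=\mathbb{E}_{p_\theta}\big[\log Q_\phi(z|x,\hat y)\big]+\mathbb{E}_{x,\hat y}\Big[\mathrm{KL}\big(p_\theta(\cdot|x,\hat y)\,\|\,Q_\phi(\cdot|x,\hat y)\big)\Big].
\end{equation*}
The KL term is non-negative, so
\[
I_{\tilde P}(Z;\hat Y|X)\ge H_{\tilde P}(Z|X)+\mathbb{E}_{x,\,z\sim\tilde P,\,\hat y\sim P_\theta}\big[\log Q_\phi(z|x,\hat y)\big]\eqqcolon \mathcal{L}(\theta).
\]
The bound is tight exactly when the frozen IDM equals the model-induced posterior.

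Finally we connect $\mathcal{L}(\theta)$ to the algorithm. Since $H_{\tilde P}(Z|X)$ does not depend on $\theta$, $\nabla_\theta\mathcal{L}$ equals the gradient of the expected reward $\mathbb{E}[r]$ with $r=\log Q_\phi(z|x,\hat y)$. By the score-function identity this gradient is $\mathbb{E}[r\,\nabla_\theta\log P_\theta(\hat y|x,z)]$. Subtracting the group-mean baseline leaves it unbiased, because $\mathbb{E}[\nabla_\theta\log P_\theta]=0$. Hence the GRPO advantage-weighted update in Phase I is a stochastic ascent direction on $\mathcal{L}(\theta)$, up to the per-group scaling by the standard deviation and the KL regulariser. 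We will note these as the usual GRPO approximations rather than treating them exactly.

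The main obstacle is the definitional step. The statement only makes sense if $z$ is sampled from the empirical belief $\tilde P$, independently of the rollout $\hat y$. The IDM is conditioned on the real $y$, not on $\hat y$, so we must check that $\hat y$ depends on $y$ only through $z$. This holds because $\hat y\sim P_\theta(\cdot|x,z)$. With that in place, the marginalisation over $y$ is legitimate and the Barber--Agakov argument goes through unchanged. We will also state that $\mathcal{A}$ is discrete (token sequences), or else interpret the entropies as differential entropies, so that $H_{\tilde P}(Z|X)$ is well defined and finite.
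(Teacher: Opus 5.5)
Your proposal is correct and takes essentially the same route as the paper. It uses the joint $\mathcal{D}(x)\tilde{P}(z|x)P_\theta(\hat y|x,z)$, the $\theta$-independence of $H_{\tilde P}(Z|X)$, the Barber--Agakov bound from non-negativity of the posterior KL, and the score-function link to the GRPO update; your explicit check that $\hat y$ depends on $y$ only through $z$ usefully tightens the paper's ``substitute the definition of $\tilde P$'' step.

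One small correction to your aside on the baseline. A group-mean baseline that includes the sample's own reward gives an estimator whose expectation is the true gradient scaled by $(G-1)/G$, so it is not exactly unbiased; a leave-one-out baseline would be. This does not affect your conclusion that the update is an ascent direction.
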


\begin{proof}
The mutual information under the data-induced joint distribution $P(x) \tilde{P}(z|x) P_\theta(\hat{y}|x, z)$ is:
\begin{equation} \label{eq:cmi_def}
I(Z; \hat{Y} | X) = \mathbb{E}_{x \sim \mathcal{D}} \left[ H_{\tilde{P}}(Z|X) - H(Z | \hat{Y}, X) \right].
\end{equation}
The entropy of the latent belief $H_{\tilde{P}}(Z|X)$ depends only on the frozen IDM and data distribution, and is thus constant w.r.t. $\theta$. Maximising CMI is therefore equivalent to minimizing the conditional entropy $H(Z | \hat{Y}, X)$.
We use the variational posterior $Q_\phi(z|x,\hat{y})$ as a proxy for the intractable true posterior $P_\theta(z|\hat{y},x)$ \citep{Barber2003variationalMutualInformation} to bound the entropy term:
\begin{align}\label{eq:variational_bound}
    - H&(Z | \hat{Y}, X) = \mathbb{E}_{x, z \sim \tilde{P}, \hat{y} \sim P_\theta} [\log P_\theta(z | \hat{y}, x)] \\
    \nonumber
    &\geq \mathbb{E}_{x \sim \mathcal{D}} \mathbb{E}_{z \sim \tilde{P}(z|x)} \mathbb{E}_{\hat{y} \sim P_\theta(\cdot|x,z)} [\log Q_\phi(z | x, \hat{y})].
\end{align}
Substituting the definition of $\tilde{P}(z|x)$ creates the following objective:
\begin{equation} \label{eq:final_fwm_obj}
    \mathcal{J}(\theta) = \mathbb{E}_{(x, y) \sim \mathcal{D}} \mathbb{E}_{z \sim Q_\phi(\cdot|x,y)} \mathbb{E}_{\hat{y} \sim P_\theta(\cdot|x,z)} [{\log Q_\phi(z | x, \hat{y})}].
\end{equation}
This matches Algorithm \ref{algo:ourapproach}: we sample trajectories $(x, y)$, infer $z$ using the IDM, rollout $\hat{y}$ using the FWM, and reward the FWM with the IDM's log likelihood.
The gradient of Eq.~\ref{eq:final_fwm_obj} is $\nabla_\theta\mathcal{J}(\theta) = \mathbb{E}_{(x, y) \sim \mathcal{D}} \mathbb{E}_{z \sim Q_\phi(\cdot|x,y)} \mathbb{E}_{\hat{y} \sim P_\theta(\cdot|x,z)}[\log Q_\phi(z \mid x,\hat y) \nabla_\theta \log P_\theta(\hat{y} \mid x, z)]$,
for which Group Relative Policy Optimisation (GRPO) offers an 
estimator:
\begin{equation} \label{eq:grpo_fwm}
   \widehat{\nabla_\theta \mathcal J}_\text{GRPO}(\theta) = \frac{1}{G} \sum_{k=1}^G A^{\text{F}}_k \nabla_\theta \log P_\theta(\hat{y}_k \mid x, z),
\end{equation}
where $A_k$ are the group-relative advantages derived from rewards $R_k = \log Q_\phi(z \mid x, \hat{y}_k)$.
\end{proof}

\paragraph{Phase II: IDM optimisation via ELBO Maximisation.}
In the second phase, we freeze $\theta$ and optimise $\phi$. The goal is to infer actions $z$ that explain the transition $(x, y)$ in the FWM dynamics, while staying close to the valid prior. We treat the initialised model at each iteration as an \textit{informative prior} $P(z|x) \triangleq \pi_{\text{ref}}(z|x)$. This corresponds to maximising the Evidence Lower Bound (ELBO).

\begin{theorem}[IDM Lower Bound]
\label{thm:idp_bound}
Optimising the IDM via the KL-regularised policy gradient objective, with reward $R = \log P_\theta(y | x, z)$, $\beta=1$, reference policy $\pi_{\mathrm{ref}}$ maximises the ELBO objective where the prior is defined by $\pi_{\mathrm{ref}}$.
\end{theorem}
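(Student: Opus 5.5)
We begin from the standard ELBO for the marginal likelihood of an observed transition under the frozen FWM. For a fixed pair $(x,y)\sim\mathcal{D}$, define the generative model $p(y,z\mid x) = \pi_{\mathrm{ref}}(z\mid x)\,P_\theta(y\mid x,z)$, so that $\log P_\theta(y\mid x) = \log \sum_z \pi_{\mathrm{ref}}(z\mid x) P_\theta(y\mid x,z)$. Introducing the IDM $Q_\phi(z\mid x,y)$ as the variational posterior and applying Jensen's inequality gives
\begin{equation*}
\log P_\theta(y\mid x) \;\geq\; \mathbb{E}_{z\sim Q_\phi(\cdot\mid x,y)}\bigl[\log P_\theta(y\mid x,z)\bigr] - \mathrm{KL}\bigl(Q_\phi(\cdot\mid x,y)\,\|\,\pi_{\mathrm{ref}}(\cdot\mid x)\bigr) \;\triangleq\; \mathcal{L}(\phi;x,y).
\end{equation*}
The slack is exactly $\mathrm{KL}(Q_\phi(\cdot\mid x,y)\,\|\,p(z\mid x,y))$, where $p$ is the true posterior induced by the prior and the FWM. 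This identity is what justifies calling the maximisation over $\phi$ with $\theta$ fixed a coordinate-ascent step: raising $\mathcal{L}$ in $\phi$ tightens the bound toward $\log P_\theta(y\mid x)$.

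Next we write down the KL-regularised policy-gradient objective that GRPO approximates. With policy $Q_\phi(\cdot\mid x,y)$, reward $R(z)=\log P_\theta(y\mid x,z)$, coefficient $\beta$ and reference $\pi_{\mathrm{ref}}$, it reads
\begin{equation*}
\mathcal{J}_{\mathrm{KL}}(\phi)=\mathbb{E}_{(x,y)\sim\mathcal{D}}\Bigl[\mathbb{E}_{z\sim Q_\phi}[R(z)] - \beta\,\mathrm{KL}\bigl(Q_\phi(\cdot\mid x,y)\,\|\,\pi_{\mathrm{ref}}(\cdot\mid x)\bigr)\Bigr].
\end{equation*}
Setting $\beta=1$ and substituting the reward makes this term-by-term identical to $\mathbb{E}_{\mathcal{D}}[\mathcal{L}(\phi;x,y)]$. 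Hence the two objectives share maximisers and gradients. Rewriting $\mathbb{E}_{Q_\phi}[\log P_\theta(y\mid x,z) + \log\pi_{\mathrm{ref}}(z\mid x) - \log Q_\phi(z\mid x,y)]$ through the score-function identity yields the same REINFORCE-style gradient as in Eq.~\ref{eq:grpo_fwm}. So the group-relative estimator with rewards $R_k=\log P_\theta(y\mid x,z_k)$, plus the KL penalty, is a (baselined) estimator of $\nabla_\phi\mathcal{L}$. The baseline term is harmless because $\mathbb{E}_{Q_\phi}[\nabla_\phi\log Q_\phi]=0$.

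The main obstacle is the mismatch in conditioning: $\pi_{\mathrm{ref}}$ is the IDM's initialisation, which in practice conditions on $(x,y)$, while the ELBO prior should depend only on $x$. I would handle this as the paper's setup does, by declaring the prior to be $P(z\mid x)\triangleq\pi_{\mathrm{ref}}(z\mid x)$ as a modelling choice. I would then remark that if $\pi_{\mathrm{ref}}$ also conditions on $y$, the objective is still a valid lower bound, namely an ELBO with a data-dependent prior, and it remains a maximised variational objective for fixed $\theta$.

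A secondary point is that GRPO uses a clipped surrogate and an approximate KL estimator. I would note that at the on-policy point, with clipping inactive, its gradient coincides with the gradient above. The claim is therefore about the objective being optimised, not about exact equality of every update.
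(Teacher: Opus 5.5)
Your proof is correct and follows the paper's argument: Jensen's inequality with $\pi_{\mathrm{ref}}(z\mid x)$ as the prior gives $\mathbb{E}_{Q_\phi}[\log P_\theta(y\mid x,z)]-D_{\mathrm{KL}}(Q_\phi(\cdot\mid x,y)\,\|\,\pi_{\mathrm{ref}}(\cdot\mid x))$, and with $R=\log P_\theta(y\mid x,z)$ and $\beta=1$ the KL-regularised objective equals this bound term by term. Your slack identity and your observation that $\pi_{\mathrm{ref}}$ in practice conditions on $y$ are extras the paper omits, but with a $y$-dependent prior the bounded quantity $\log\sum_z\pi_{\mathrm{ref}}(z\mid x,y)P_\theta(y\mid x,z)$ is no longer a normalised likelihood of $y$, so calling it an ELBO is only formal; likewise your closing GRPO remark overreaches (though the theorem does not rely on it), since even on-policy with clipping inactive the update is not an unbiased estimator of $\nabla_\phi\mathcal{L}$: std-normalised advantages rescale the reward against the KL penalty by a prompt-dependent factor, and the standard GRPO penalty $\frac{\pi_{\mathrm{ref}}}{Q_\phi}-\log\frac{\pi_{\mathrm{ref}}}{Q_\phi}-1$, differentiated on samples from $Q_\phi$, has expected gradient $\nabla_\phi D_{\mathrm{KL}}(\pi_{\mathrm{ref}}\,\|\,Q_\phi)$ at the sequence level rather than $\nabla_\phi D_{\mathrm{KL}}(Q_\phi\,\|\,\pi_{\mathrm{ref}})$.
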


\begin{proof}
We seek to maximise the marginal log-likelihood $\log P_\theta(y | x)$. By introducing the variational distribution $Q_\phi(z|x,y)$ and the informative prior $P(z|x) = \pi_{\text{ref}}(z|x)$, we derive the ELBO:
\begin{equation}
\begin{split}
    \log P_\theta(y | x) &= \log \mathbb{E}_{z \sim Q_\phi} \left[ \frac{P_\theta(y | x, z) P(z|x)}{Q_\phi(z | x, y)} \right] \nonumber \\
    &\ge \mathbb{E}_{z \sim Q_\phi(\cdot | x, y)} \left[ \log P_\theta(y | x, z) \right] \\
    &- D_{\mathrm{KL}}(Q_\phi(z | x, y) \,||\,P(z|x)) \triangleq \mathcal{L}_{\text{ELBO}}.
\end{split}
\end{equation}
The KL-regularised policy gradient objective is defined as the expected reward subject to a KL constraint weighted with $\beta=1$ against the reference policy:\footnote{In practice, we may allow $\beta$ to be tunable for adjusting the deviation against the reference policy, as the common practice in GRPO.}
\begin{align}\label{eqt:idp_elbo_lowerbound}
\begin{split}
    \mathcal{J}(\phi) & =  \mathbb{E}_{z \sim Q_\phi} [R(x,z,y)] \\ &- D_{\mathrm{KL}}(Q_\phi(\cdot|x,y) || \pi_{\text{ref}}(\cdot|x)).
\end{split}
\end{align}
By setting the reward $R(x,z,y) = \log P_\theta(y|x,z)$ and identifying the prior $P(z|x)$ with the reference policy $\pi_{\text{ref}}(z|x)$, we observe that $\mathcal{J}(\phi) \equiv \mathcal{L}_{\text{ELBO}}$. We again use the GRPO as the estimator to perform coordinate ascent on the ELBO.
\end{proof}

\section{Experiments and Results}
\subsection{Experimental Setup}
\paragraph{Models.} We choose Liquid \citep{wu2024liquid} as our base VLM as the publicly best 7B-size unified VLM with an autoregressive architecture. The medium size limits the scale requirements for the compute infrastructure, and the autoregressive nature allows for applying GRPO off-the-shelf (without \textit{ad-hoc} adaptations for diffusion-based generation). For text-based environments, we use Qwen-2.5-3B-Instruct \citep{team2024qwen2}, a competitive mid-size LLM.

\paragraph{SFT Warm-up.} Since our base VLMs/LLMs are general-purpose, they lack the specific interface capabilities required for environments (e.g., Liquid cannot natively predict image conditioning on both image and textual action). Prior to \ourapproach, we conduct an initial SFT, which is strictly viewed as policy initialisation to ensure the model outputs valid actions and states; without this, a random policy would generate outputs rendering RL exploration impossible (as evidenced by Liquid's poor zero-shot performance in \gedit in Appendix~\ref{appendix:gedit-result}). For VLMs, we utilise image editing mixtures from \picobanana \citep{qian2025picobanana} and \aurora \citep{krojer2024aurora}.\footnote{ 
As a sanity check, we report the zero-shot and SFT performance of Liquid on \gedit compared to other VLMs in Appendix~\ref{appendix:gedit-result}.} For LLMs, we fine-tune on the half of environment-specific episodes, and remain the rest discarding the annotated actions for \ourapproach.

\paragraph{Iterative RL.}
For \ourapproach, we first conduct controlled experiments on the unlabelled video mixture from UCF-101 \citep{soomro2012ucf101}, Movement-in-Times \citep{monfortmoments-MIT}, Kinetics700 \citep{kinetics700}, limiting training to the first phase IDM$\rightarrow$FWM of \ourapproach. 
This setup enables strict comparison with the bootstrapping baseline of \citet{qiu2025bootstrapping} and allows us to observe stable convergence within a single epoch. 
After this controlled phase, we scale training by uniformly sampling 30K videos per iteration from a large-scale unlabelled video corpus, \vidgen \citep{tan2024vidgen}. We extract the frame pairs from videos as in \citep{chen2025unireal}.
Each iteration is trained for one epoch, and the model alternates between FWM and IDM optimisation phases as described in Section~\ref{sec:method}. 
This protocol ensures a clean comparison to baselines in \citep{qiu2025bootstrapping} while measuring the improvement across iterations.

\paragraph{Benchmarks.}
We evaluate visual world modelling under two settings. 
Firstly, for single-turn next-observation prediction, we use \aurorabench and \bytemorph, both of which formulate dynamics prediction as action-conditioned image editing tasks with a strong emphasis on correctness in action-centric dynamics. 
Secondly, to evaluate long-horizon world modelling, we adopt \worldpredictionbench, which supports multi-step rollouts of up to four future observations across five subtasks, enabling a comprehensive assessment of FWM consistency.

To validate the generalisation of \ourapproach, we conduct experiments across three grounded environments on LLM, each is a unique scheme of state transition. We utilise \scienceworld \citep{wang2022scienceworld} to evaluate physical dynamics, where LLM must predict the textual consequences of scientific actions within a simulated world. We also employ \mindtoweb \citep{deng2023mind2web}, which challenges the model to forecast the updated HTML DOM tree following user interactions (e.g., clicks) on web elements. Finally, we assess functional tool dynamics via \stabletoolbench \citep{guo2024stabletoolbench}, where the objective is to simulate the execution output of API calls conditioned on the current conversational state and the invoked tool.

\paragraph{Baselines.}
For visual world modelling, we compare \ourapproach against a directly fine-tuned Liquid model (\textsc{SFT}). 
We reproduce two strong baselines from \citet{qiu2025bootstrapping}: \textsc{Test-time Verification}, which selects the best sample from the fine-tuned Liquid based on IDM scores, and \textsc{Bootstrap}, which fine-tunes Liquid on silver data annotated by the IDM. 
We also include specialised diffusion-based image editing models, including GoT \citep{fang2025got}, SmartEdit \citep{huang2024smartedit}, InstructPix2Pix \citep{brooks2023instructpix2pix}, and the Chameleon model family, following the protocol of \citet{qiu2025bootstrapping}. 
To position our approach against recent (larger or diffusion-based) unified VLMs, we also evaluate BAGEL \citep{deng2025emerging}, OmniGen \citep{xiao2025omnigen}, OmniGen2 \citep{wu2025omnigen2}, BLIP3o-NEXT \citep{chen2025blip3oNext}, and UniWorld-V1 \citep{lin2025uniworld}.

For text-based environments, prior work under this formulation is limited. 
We therefore primarily compare against SFT baselines and larger LLMs to ensure a fair comparison.

\begin{table*}[t]
\centering
\begin{sc}
\caption{\textbf{Quantitative comparison on \aurorabench.} We report GPT-4o-as-a-judge scores, DiscEdit (DE), and CLIP scores across five datasets (and their Average).  %
\colorbox{deepmindblue}{Blue cells} indicate where \ourapproach outperforms \colorbox{gray!10}{Liquid-SFT}, its direct baseline.}
\label{tab:main_results}
\resizebox{\textwidth}{!}{%
\setlength{\tabcolsep}{2.5pt} %
\renewcommand{\arraystretch}{1.1} %
\begin{tabular}{l ccc ccc ccc ccc ccc | ccc}
\toprule
\multirow{2}{*}{\textbf{Method}} & \multicolumn{3}{c}{\textbf{MagicBrush}} & \multicolumn{3}{c}{\textbf{Action Genome}} & \multicolumn{3}{c}{\textbf{Something}} & \multicolumn{3}{c}{\textbf{Whatsup}} & \multicolumn{3}{c|}{\textbf{Kubric}} & \multicolumn{3}{c}{\textbf{Average}} \\
\cmidrule(lr){2-4} \cmidrule(lr){5-7} \cmidrule(lr){8-10} \cmidrule(lr){11-13} \cmidrule(lr){14-16} \cmidrule(lr){17-19}
 & GPT-4o & DE & CLIP & GPT-4o & DE & CLIP & GPT-4o & DE & CLIP & GPT-4o & DE & CLIP & GPT-4o & DE & CLIP & GPT-4o & DE & CLIP \\
\midrule
\multicolumn{16}{l|}{{Unified VLMs}} & \multicolumn{3}{l}{}\\
BAGEL-14B  & {8.14} & 0.44 & {0.95} & 6.72 & 0.08 & {0.90} & {6.90} & 0.16 & {0.82} & 5.40 & 0.22 & {0.95} & 5.82 & 0.18 & {0.94} & 6.44 & 0.22 & {0.91} \\
OmniGen2 & 7.04 & 0.48 & 0.92 & 4.96 & 0.12 & {0.90} & 6.00 & 0.21 & {0.82} & 6.60 & 0.28 & 0.92 & 5.54 & 0.14 & 0.89 & 6.05 & 0.25 & 0.89 \\
BLIP3o-NEXT & 3.42 & {0.70} & 0.74 & 3.04 & 0.44 & 0.67 & 2.86 & 0.35 & 0.66 & 3.00 & 0.40 & 0.78 & 3.40 & 0.54 & 0.74 & 3.14 & {0.49} & 0.72 \\
OmniGen & 6.86 & 0.48 & 0.94 & 5.70 & 0.18 & 0.85 & 7.11 & {0.46} & 0.80 & 7.52 & 0.24 & 0.93 & 5.69 & 0.16 & 0.91 & 6.59 & 0.30 & 0.89 \\
UniWorld-V1 & 7.42 & 0.40 & 0.91 & {7.06} & 0.10 & 0.88 & 7.37 & 0.25 & 0.77 & {8.20} & {0.46} & 0.91 & 6.76 & 0.42 & 0.84 & {7.36} & 0.33 & 0.86 \\
\midrule
\multicolumn{16}{l|}{\textbf{Existing Baselines}} & \multicolumn{1}{l}{}\\
InstructPix2Pix & 3.12 & --- & --- & 1.20 & --- & --- & 0.96 & --- & --- & 0.00 & --- & --- & 1.88 & --- & --- & 1.43 & --- & --- \\
GoT & 5.96 & --- & --- & 1.61 & --- & --- & 2.62 & --- & --- & 1.58 & --- & --- & 3.92 & --- & --- & 3.14 & --- & --- \\
SmartEdit & 6.71 & --- & --- & 3.08 & --- & --- & 2.81 & --- & --- & 0.76 & --- & --- & 3.70 & --- & --- & 3.41 & --- & --- \\
Chameleon-SFT & 2.52 & --- & --- & 2.48 & --- & --- & 3.11 & --- & --- & 0.88 & --- & --- & 7.30 & --- & --- & 3.26 & --- & --- \\
Chameleon-Bootstrap & 3.27 & --- & --- & 2.74 & --- & --- & 3.11 & --- & --- & 0.98 & --- & --- & 7.30 & --- & --- & 3.48 & --- & --- \\ 
\midrule
\midrule
\multicolumn{16}{l|}{\textbf{Our baselines}} & \multicolumn{1}{l}{} \\
\rowcolor{gray!10}Liquid-SFT & 5.46 & 0.28 & 0.92 & 2.76 & 0.34 & 0.79 & 3.00 & 0.27 & 0.74 & 3.60 & 0.32 & 0.85 & 7.00 & 0.60 & 0.90 & 4.36 & 0.36 & 0.84 \\
Liquid-Bootstrap & 5.27 & 0.30 & 0.89 & 3.02 & 0.42 & 0.79 & 2.86 & 0.29 & 0.75 & 3.88 & 0.28 & 0.87 & 5.57 & 0.30 & 0.90 & 4.11 & 0.32 & 0.84 \\
\multicolumn{16}{l|}{Liquid-SFT w/ Test-time Verification} & \multicolumn{1}{l}{} \\
\hspace{3mm} $N=2$ & 5.71 & 0.14 & 0.93 & 3.00 & 0.38 & 0.80 & 3.40 & 0.21 & 0.76 & 4.38 & 0.36 & 0.86 & 6.18 & 0.44 & 0.90 & 4.53 & 0.31 & 0.85 \\
\hspace{3mm} $N=4$ & 6.10 & 0.14 & 0.93 & 3.38 & 0.36 & 0.81 & 3.44 & 0.25 & 0.76 & 4.38 & 0.40 & 0.87 & 6.04 & 0.38 & 0.90 & 4.66 & 0.31 & 0.85 \\
\hspace{3mm} $N=8$ & 6.18 & 0.16 & 0.92 & 3.28 & 0.36 & 0.81 & 3.74 & 0.21 & 0.74 & 4.48 & 0.28 & 0.86 & 6.28 & 0.46 & 0.90 & 4.77 & 0.29 & 0.85 \\
\midrule
\textbf{\ourapproach (IDM $\to$ FWM)} & \cellcolor{deepmindblue}6.48 & 0.26 & \cellcolor{deepmindblue}0.92 & \cellcolor{deepmindblue}3.58 & \cellcolor{deepmindblue}0.36 & \cellcolor{deepmindblue}0.80 & \cellcolor{deepmindblue}3.44 & \cellcolor{deepmindblue}0.33 & \cellcolor{deepmindblue}0.74 & \cellcolor{deepmindblue}4.08 & \cellcolor{deepmindblue}0.32 & 0.86 & 6.59 & 0.50 & \cellcolor{deepmindblue}0.90 & \cellcolor{deepmindblue}4.83 & \cellcolor{deepmindblue}0.36 & \cellcolor{deepmindblue}0.84 \\
\textbf{\ourapproach (Iterative)} & \cellcolor{deepmindblue}6.62 & \cellcolor{deepmindblue}0.30 & \cellcolor{deepmindblue}0.92 & \cellcolor{deepmindblue}3.52 & \cellcolor{deepmindblue}{0.48} & \cellcolor{deepmindblue}0.80 & \cellcolor{deepmindblue}3.96 & 0.21 & \cellcolor{deepmindblue}0.74 & \cellcolor{deepmindblue}4.32 & \cellcolor{deepmindblue}0.36 & 0.86 & 6.96 & 0.54 & \cellcolor{deepmindblue}0.90 & \cellcolor{deepmindblue}5.06 & \cellcolor{deepmindblue}0.38 & \cellcolor{deepmindblue}0.84 \\
\textbf{\ourapproach (Iter. + Share)} & \cellcolor{deepmindblue}6.00 & \cellcolor{deepmindblue}0.28 & 	\cellcolor{deepmindblue}0.92 & 	\cellcolor{deepmindblue}3.40 & 	\cellcolor{deepmindblue}0.40 & 	\cellcolor{deepmindblue}0.80 & 	\cellcolor{deepmindblue}3.73 & 	\cellcolor{deepmindblue}0.31 & 	\cellcolor{deepmindblue}0.74 & 	\cellcolor{deepmindblue}4.46 & 	0.30 & 	0.85 & 	\cellcolor{deepmindblue}{7.45} & 	\cellcolor{deepmindblue}{0.62} & 	\cellcolor{deepmindblue}0.91 & 	\cellcolor{deepmindblue}5.00 & 	\cellcolor{deepmindblue}0.39 & 	\cellcolor{deepmindblue}0.84 \\
\bottomrule
\end{tabular}%
}
\end{sc}
\vspace{-2mm}
\end{table*}

\paragraph{Evaluation Metrics.}
For visual forward world modelling, we adopt GPT-4o-as-a-judge for holistic evaluation, following \citep{qiu2025bootstrapping,fang2025got} with a 10-point scheme. 
We also report DiscEdit (DE) and CLIP scores for \aurorabench as in \citep{krojer2024aurora}. %
For text environments, we use BLEU, BERTScore, and ROUGE-L. 

\subsection{Single-turn Visual Dynamics Prediction} 

\paragraph{\aurorabench.}

We present the quantitative evaluation of visual world modelling on \aurorabench in Table~\ref{tab:main_results}. We compare \ourapproach against state-of-the-art unified VLMs, specialised diffusion-based editing models, and ablations of our method. The primary comparison of interest is against Liquid-SFT, our direct supervised fine-tuning baseline. As highlighted in blue, \ourapproach delivers consistent and significant improvements across all five benchmarks. 
Compared to the baselines from \citep{qiu2025bootstrapping}, \ourapproach surpasses the bootstrapping strategy using IDM to synthesise trajectories from unlabelled videos, and computation-heavy inference techniques like Test-time Verification (with up to $N=8$ samples). 
Our best iterative strategy raises the average GPT-4o evaluation score from 4.36 (SFT) to 5.06, which also outperforms the non-iterative supervision from IDM only (\textsc{\ourapproach (IDM $\rightarrow$ FWM)}) at 4.83, indicating the effectiveness of \ourapproach by alternating policy and reward roles between FWM and IDM.
Furthermore, despite starting from a weaker base model such as Liquid, and relying on a more lightweight post-training,\footnote{We use only around 400K samples from \citep{qian2025picobanana,krojer2024aurora} to initialise the editing ability of Liquid.} \ourapproach remains highly competitive with other state-of-the-art unified VLMs such as OmniGen2 and BAGEL, while substantially outperforming diffusion-based editors like InstructPix2Pix. Qualitative examples for \ourapproach are presented in Appendix~\ref{appendix:qualitative-examples}.

\begin{table}[t]
\centering
\caption{\textbf{GPT4o-as-a-judge on the ByteMorph Benchmark.} In addition to reporting unified VLMs' performance, \colorbox{deepmindblue}{blue cells} indicate where \ourapproach outperforms \colorbox{gray!10}{Liquid-SFT}, its direct baseline. 
}
\label{tab:bytemorph}
\resizebox{\columnwidth}{!}{%
\renewcommand{\arraystretch}{1.2} 
\setlength{\tabcolsep}{4pt}
\begin{tabular}{>\sc{l} ccc cc|c}
\toprule
\textbf{Method} & \makecell{\textbf{Camera}\\\textbf{Zoom}} & \makecell{\textbf{Camera}\\\textbf{Motion}} & \makecell{\textbf{Object}\\\textbf{Motion}} & \makecell{\textbf{Human}\\\textbf{Motion}} & \makecell{\textbf{Inter-}\\\textbf{action}} & \textbf{Average} \\
\midrule
BAGEL-14B & 32.89 & 49.34 & 49.33 & 45.03 & 40.89 & 44.48 \\
OmniGen & 47.76 & 52.17 & 44.88 & 43.59 & 40.92 & 44.40 \\
OmniGen2 & 49.08 & 55.39 & 61.67 & {65.52} & {61.63} & {60.14} \\
BLIP3o-NEXT & 24.21 & 13.16 & 24.33 & 34.22 & 28.53 & 27.28 \\
UniWorld-V1 & 55.53 & {68.03} & 55.91 & 48.10 & 58.03 & 55.56 \\
\midrule
\midrule
\rowcolor{gray!10}Liquid-SFT & 57.23 & 56.51 & 43.13 & 38.07 & 40.70 & 43.38 \\
\multicolumn{6}{l|}{\textbf{\ourapproach}} \\
\textbf{(IDM$\to$FWM)} & \cellcolor{deepmindblue}{57.37} & 50.13 & \cellcolor{deepmindblue}{62.50} & \cellcolor{deepmindblue}48.69 & \cellcolor{deepmindblue}56.53 & \cellcolor{deepmindblue}54.57 \\
\textbf{(Iterative)} & 54.08 & \cellcolor{deepmindblue}58.22 & \cellcolor{deepmindblue}58.69 & \cellcolor{deepmindblue}48.08 & \cellcolor{deepmindblue}54.50 & \cellcolor{deepmindblue}53.77 \\
\textbf{(Iter. + Share)} & 53.16 & 55.86 & \cellcolor{deepmindblue}62.10 & \cellcolor{deepmindblue}53.78 & \cellcolor{deepmindblue}53.81 & \cellcolor{deepmindblue}55.72 \\
\bottomrule
\end{tabular}%
}
\end{table}

\begin{table*}[t]
\caption{\textbf{Long-Horizon World Modelling on \worldpredictionbench.} We report GPT-4o-as-a-judge scores. The left section averages performance across the available horizon for each subtask 
(Turns 1--6 for EgoExo, EPIC, IKEA; Turns 1--4 for COIN, CrossTask). 
The right section details the decay of the \textit{Overall} score at each prediction turn. \colorbox{deepmindblue}{Blue cells} indicate where \ourapproach surpasses \colorbox{gray!10}{Liquid-SFT}.
}
\label{tab:world_prediction_bench_6turn}
\centering
\begin{small}
\begin{sc}
\resizebox{\textwidth}{!}{%
\begin{tabular}{lcccccc|cccccc}
\toprule
& \multicolumn{6}{c}{\textbf{Average Task Score (Horizon 1--6)}} & \multicolumn{6}{c}{\textbf{Overall Score by Turn}} \\
\cmidrule(lr){2-7} \cmidrule(lr){8-13}
\textbf{Model} & \textbf{COIN}$^\dagger$ & \textbf{Cross}$^\dagger$ & \textbf{EgoExo} & \textbf{EPIC} & \textbf{IKEA} & \textbf{Avg.} & \textbf{T=1} & \textbf{T=2} & \textbf{T=3} & \textbf{T=4} & \textbf{T=5} & \textbf{T=6} \\
\midrule
Bagel                                    & {4.31}                     & {3.94}                          & 2.88                                  & {3.37}                              & 3.01                                 & 3.30                              & {4.29}                    & {4.10}                    & 3.47                             & 3.22                             & 2.47                             & 2.23                             \\
OmniGen2                                 & 3.78                              & 3.53                                   & 2.73                                  & 3.01                                       & 3.24                                 & 3.28                              & 3.25                             & 2.64                             & {4.05}                             & 3.75                             & 3.06                             & 3.11                             \\
BLIP3o-Next                              & 1.74                              & 1.58                                   & 1.24                                  & 1.37                                       & 1.70                                 & 1.31                              & 2.46                             & 1.82                             & 1.20                             & 0.95                             & 0.83                             & 0.63                             \\
OmniGen                                  & 3.61                              & 3.48                                   & {3.20}                         & 3.14                                       & {3.56}                        & {3.45}                     & 3.25                             & 2.83                             & 4.00                    & {3.92}                    & {3.28}                    & {3.45}                    \\
UniWorld-v1                              & 3.27                              & 3.42                                   & {3.20}                         & 2.57                                       & 3.39                                 & 3.35                              & 3.40                             & 3.41                             & 3.68                             & 3.59                             & 3.09                             & 2.97                             \\
\midrule
\midrule
\rowcolor{gray!10}Liquid-SFT                               & 2.11                              & 2.06                                   & 1.59                                  & 1.80                                       & 1.35                                 & 1.63                              & 3.09                             & 2.09                             & 1.40                             & 1.17                             & 1.07                             & 0.97                             \\
\textbf{\ourapproach (IDM $\rightarrow$ FWM)}       & \cellcolor{deepmindblue}2.61                              & \cellcolor{deepmindblue}2.45                                   & \cellcolor{deepmindblue}1.84                                  & \cellcolor{deepmindblue}1.95                                       & \cellcolor{deepmindblue}1.56                                 & \cellcolor{deepmindblue}1.89                              & \cellcolor{deepmindblue}3.24                             & \cellcolor{deepmindblue}2.42                             & \cellcolor{deepmindblue}1.85                             & \cellcolor{deepmindblue}1.59                             & \cellcolor{deepmindblue}1.25                             & \cellcolor{deepmindblue}1.08                            \\
\textbf{\ourapproach (Iterative)}  & \cellcolor{deepmindblue}2.22                              & 2.01                                   & \cellcolor{deepmindblue}1.76                                  & \cellcolor{deepmindblue}1.86                                       & \cellcolor{deepmindblue}1.49                                 & \cellcolor{deepmindblue}1.74                              & \cellcolor{deepmindblue}3.23                             & 2.08                             & \cellcolor{deepmindblue}1.53                             & \cellcolor{deepmindblue}1.32                             & \cellcolor{deepmindblue}1.15                             & \cellcolor{deepmindblue}1.11                             \\
\textbf{\ourapproach (Iter.+Share)} & 2.06                              & \cellcolor{deepmindblue}2.28                                   & \cellcolor{deepmindblue}1.82                                  & 1.66                                       & \cellcolor{deepmindblue}1.43                                 & \cellcolor{deepmindblue}1.68                              & \cellcolor{deepmindblue}2.95                             & 2.04                             & \cellcolor{deepmindblue}1.53                             & \cellcolor{deepmindblue}1.24                             & \cellcolor{deepmindblue}1.23                             & \cellcolor{deepmindblue}1.11                             \\

\bottomrule
\end{tabular}
}
\end{sc}
\begin{flushleft}
\end{flushleft}
\end{small}
\vspace{-2mm}
\end{table*}

\begin{table*}[t]
\centering
\caption{\textbf{Main Results on Textual Environments.} We evaluate performance on \scienceworld, \mindtoweb, and \stabletoolbench. We report BERTScore (BS) and ROUGE-L (R-L) for the first two tasks, and BLEU across all tasks for \stabletoolbench as in \citep{guo2024stabletoolbench}. \textbf{Bold} indicates the best performance in each column, \colorbox{deepmindblue}{blue cells} indicate where \ourapproach outperforms \colorbox{gray!10}{Liquid-SFT}.}
\label{tab:llm_results}
\footnotesize

\renewcommand{\arraystretch}{1.2}
\setlength{\tabcolsep}{5pt}
\begin{small}
\begin{sc}
\resizebox{\textwidth}{!}{%
\begin{tabular}{l cc| cc| ccc ccc}

\toprule
 & \multicolumn{2}{c|}{\textbf{ScienceWorld}} 
 & \multicolumn{2}{c|}{\textbf{Mind2Web}} 
 & \multicolumn{6}{c}{\textbf{StableToolBench}} \\
\cmidrule(lr){2-3} \cmidrule(lr){4-5} \cmidrule(lr){6-11}
\textbf{Model}
 & \textbf{BS} & \textbf{R-L}
 & \textbf{BS} & \textbf{R-L}
 & \textbf{ID High} & \textbf{ID Low} & \textbf{ID Med}
 & \textbf{OOD} & \textbf{OOD Fail} & \textbf{Average} \\
\midrule
Qwen-2.5-3B-Instruct   & 82.87 & 18.68 & 82.11 & 18.01 & 7.74 & 7.74 & 9.48 & 6.59 & 3.18 & 6.95\\
Qwen-2.5-7B-Instruct   & 80.48 & 12.50 & 77.03 & 7.35  & 5.93 & 9.89 & 8.30 & 9.17 & 3.20 & 7.30 \\
Qwen-2.5-14B-Instruct  & 81.46 & 15.52 & 75.43 & 3.02  & 6.23 & 8.68 & 8.58 & 6.48 & 3.27 & 6.65\\
Qwen-2.5-32B-Instruct  & 81.01  &	14.53	& 79.01	& 11.86  & 7.13 & 7.13 & 10.22 & 7.75 & 4.34 & 7.31 \\
Olmo-3-7B-Instruct     & 79.27 &	11.28     & 76.05 & 6.00  & 8.28 & 13.33 & 7.62 & 9.15 & \textbf{7.54} & 9.18 \\
DeepSeek-7B-Chat   & 81.90 & 16.35 & 80.18 & 14.32 & 5.38 & 15.17 & 4.44 & 8.52 & 4.17 & 7.54 \\
\midrule
\midrule
\rowcolor{gray!10}Qwen-2.5-3B-SFT           & \textbf{96.06} & 89.73 & 92.37 & 48.97 & 16.03 & 12.87 & 17.51 & 14.86 & {2.99} & 12.85\\
\textbf{Qwen-2.5-3B-\ourapproach} & \textbf{96.06} & \cellcolor{deepmindblue}\textbf{89.92} & \cellcolor{deepmindblue}\textbf{92.44} & \cellcolor{deepmindblue}\textbf{49.10} & \cellcolor{deepmindblue}\textbf{16.47} & \cellcolor{deepmindblue}\textbf{16.90} & \cellcolor{deepmindblue}\textbf{21.20} & \cellcolor{deepmindblue}\textbf{15.57} & 2.92 & \cellcolor{deepmindblue}\textbf{14.61} \\
\bottomrule

\end{tabular}
}
\end{sc}
\end{small}
\vspace{-2mm}
\end{table*}

\paragraph{\bytemorph.}

As the \bytemorph results in Table~\ref{tab:bytemorph} show, all variants of \ourapproach substantially raise the \textit{Average} score over the Liquid-SFT baseline, e.g. \textsc{\ourapproach Iterative} goes from 43.38 to 53.77 ($+26.4$\%).
Performance on global camera control (\textit{Camera Zoom/Motion}) remains comparable. This is expected since the unlabelled in-the-wild videos used during the RL phase (e.g., VIDGEN-1M) are predominantly static and provide limited supervision for camera control. In contrast, \ourapproach yields pronounced improvements on \textit{Object/Human Motion} and \textit{Interaction}, indicating effective learning of fine-grained dynamics. Notably, \ourapproach matches the performance of larger or more heavily supervised VLMs (e.g., BAGEL and UniWorld-v1).

\subsection{Multi-turn Visual Dynamics Prediction} 
\paragraph{\worldpredictionbench.}

To assess \textit{long-horizon} world modelling in addition to single-step prediction, we evaluate performance on \worldpredictionbench. The model must predict future observations autoregressively up to $T=6$ time steps, using its own previous predictions as context. This setup tests the model's ability to maintain physical consistency and resist the compounding covariate shift inherent in sequential generation. Table~\ref{tab:world_prediction_bench_6turn} summarises the GPT-4o-as-a-judge scores across all subsets in \worldpredictionbench.

The best variant of our models demonstrates a significant improvement in temporal consistency compared to {Liquid-SFT}. While SFT yields competitive performance at the immediate next step ($T=1$), it suffers from rapid degradation as the horizon increases, dropping from a score of 3.09 to 0.97 by $T=6$. In contrast, \ourapproach (\textsc{Iterative}) maintains significantly higher fidelity throughout the rollout trajectory, achieving a +14.4\% relative improvement over the baseline at $T=6$ (1.11). We also remark that repeated self-improvement is not beneficial in this dataset, as performance peaks at the first iteration (IDM $\rightarrow$ FWM).

\subsection{Textual Environments} 

Table~\ref{tab:llm_results} shows the quantitative evaluation on textual environments.
When applied to Qwen-2.5-3B-Instruct in physical and digital simulation environments (\scienceworld and \mindtoweb), both \ourapproach and SFT are near-saturation in semantic accuracy ($>$92 BERTScore) and comparable in exact lexical matching (ROUGE-L). The advantage of our approach becomes most pronounced in \stabletoolbench, which requires the simulation of API execution outcomes. Here, we observe substantial improvements in generalisation capabilities, with our method surpassing SFT by +4.03 BLEU on \textit{ID-Low} and +3.69 BLEU on \textit{ID-Medium} splits. This establishes a new state-of-the-art for this model scale, outperforming also significantly larger open-weight instruction models (e.g., Qwen-2.5-32B, DeepSeek-7B). Overall, this suggests that the \ourapproach can generalise to internalise complex API dynamics effectively for LLMs.

\subsection{Analysis}

\begin{figure}[t]
    \centering
    \begin{subfigure}[b]{\linewidth}
        \centering
        \includegraphics[width=\linewidth]{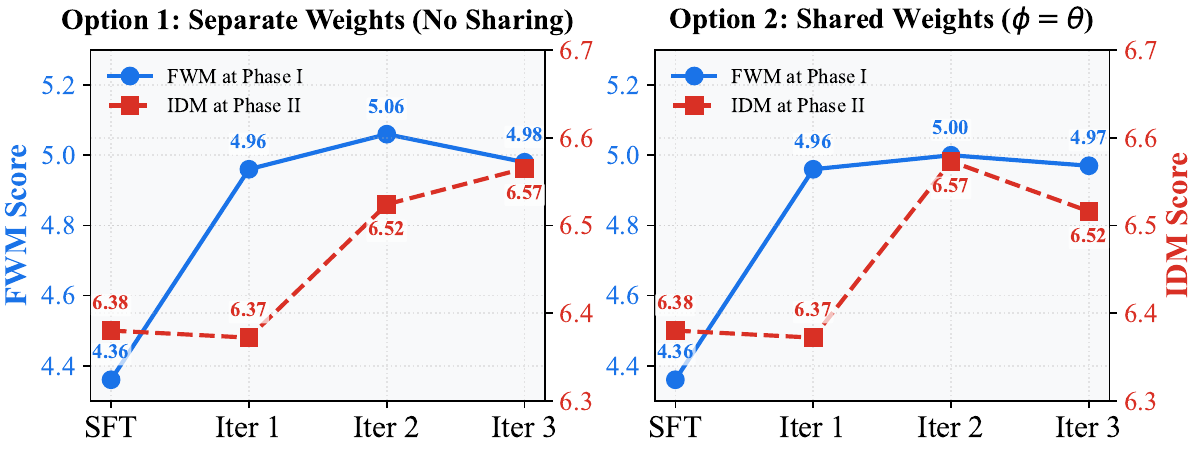}
    \end{subfigure}
    
    \vspace{0.5em} %

    \begin{subfigure}[b]{\linewidth}
        \centering
        \includegraphics[width=\linewidth]{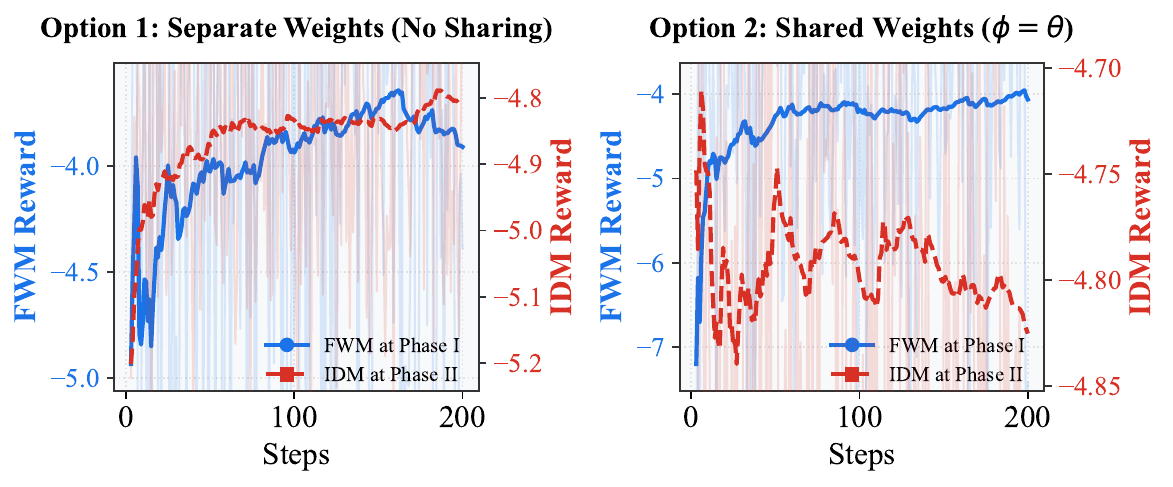}
    \end{subfigure}

    \caption{Performance of FWM and IDM in each iteration of \ourapproach. We visualise the training dynamics across iterations for two settings: maintaining separate weights (Left) versus sharing parameters (Right).
    We present the evaluation performance on top, and the training rewards in the bottom. We present only the first iteration's reward curves for brevity.
    }
    \label{fig:iterative_dynamics}
\end{figure}

\paragraph{Iterative RL.}
We run an analysis to determine the potential for cumulative gain through iterative self-improvement beyond the first round. Figure~\ref{fig:iterative_dynamics} tracks the training dynamics across three iterations, where each iteration consists of updating the FWM using the current IDM as a reward (Phase I), followed by updating the IDM using the improved FWM as a reward (Phase II). We observe a clear \textit{virtuous cycle}: enhancing the FWM's forecasting capability enables it to provide a more robust verification for action prediction, which in turn yields a more precise reward signal for the subsequent FWM updates. The reward curves demonstrate that FWM and IDM in a separate-weight setup improve the training rewards effectively, and their optimisation converges with GRPO.

\paragraph{RL vs SFT.}
To isolate the contribution of the optimisation objective, we compare \ourapproach{} against direct SFT using the same set of unlabelled videos initially annotated by our IDM model. As shown in Figure~\ref{fig:efficiency-rl-sft-comparison} with details in \ref{appendix:sft-vs-rl-details}, \ourapproach{} significantly outperforms the SFT baselines (both continued training and data merging), which stagnate or degrade as data scales. We attribute this disparity to the inherent ambiguity of visual dynamics and action verbalisation. SFT enforces a strict token-level imitation of the IDM's specific pseudo-labels; however, a single visual transition often corresponds to multiple valid descriptions. Forcing the model to mimic one specific verbalisation can lead to overfitting noise and suppressing valid alternative predictions. In contrast, our GRPO framework effectively relaxes this constraint by encouraging consistency assessed by IDM, rather than exact replication. By exploring the FWM's rollout space and action trajectories that the IDM model recognises as physically plausible, \ourapproach{} learns a more robust and generalisable FWM that is not limited by the specific phrasing of the teacher annotations.

\begin{figure}
    \centering
    \includegraphics[width=\linewidth]{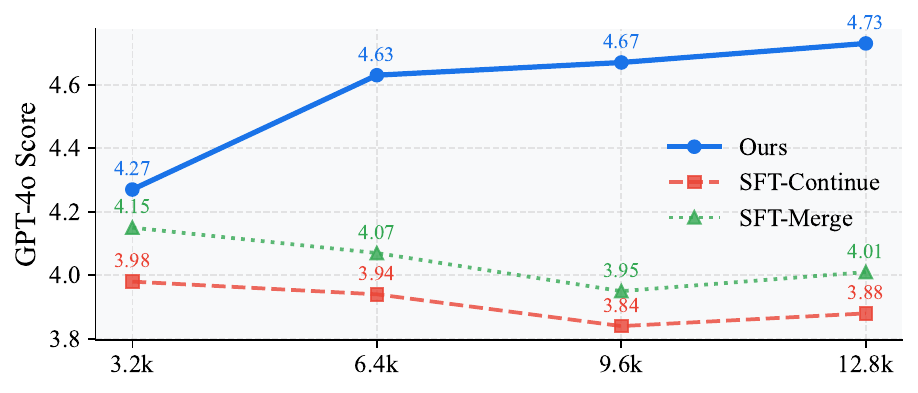}
    \caption{We compare our proposed \ourapproach against SFT baselines (continual training and merging) across five benchmarks in \aurorabench. The x-axis represents the number of training samples. Our method (solid blue line) demonstrates superior data efficiency, achieving higher GPT-4o evaluation scores.}
    \label{fig:efficiency-rl-sft-comparison}
\end{figure}

\paragraph{Sharing $\theta$ and $\phi$.}
We study the trade-off between parameter efficiency and training stability by comparing shared- and separate-weight designs (Figure~\ref{fig:iterative_dynamics} bottom and Table~\ref{tab:main_results}). Using \textit{separate weights} yields stable optimisation, with the IDM score improving monotonically from 6.37 to 6.57 over three iterations (Figure~\ref{fig:iterative_dynamics}, Left). In contrast, \textit{shared weights} ($\theta=\phi$) reduce memory footprint but introduce instability, as reflected by a performance drop at Iteration~3 (to $\sim$6.52). 

This instability is also reflected in downstream performance: on \aurorabench, shared weights slightly underperform the separate variant (5.06 vs.\ 5; Table~\ref{tab:main_results}), and on long-horizon evaluation (Table~\ref{tab:world_prediction_bench_6turn}), the shared model consistently lags behind (e.g., 1.74 vs.\ 1.68 overall). We attribute this gap to gradient interference between the heterogeneous objectives of FWM (visual generation) and IDM (inferring the linguistic action), suggesting that more robust unification mechanisms are needed to effectively share representations across modalities \citep{shi2025realunify,ma2025unitok,qu2025tokenflow,zheng2025architectureDecoupling}.

\section{Conclusion}

We introduce \ourapproach{}, a unified framework for enabling VLMs and LLMs to intrinsically model future states conditioned on the current state and latent actions, without relying on human-annotated trajectories. By interpreting actions as latent variables and alternately optimising forward world modelling and inverse dynamics modelling objectives with GRPO, our method induces self-improving world modelling purely from unlabelled data. 
We provide theoretical guarantees establishing the learnability of each optimisation phase, formally linking our objectives to variational mutual information bounds and evidence lower bounds. Empirically, we demonstrate the effectiveness of \ourapproach{} across diverse settings, including real-world visual dynamics with VLMs and textual environments (physical and digital simulations and tool calling) with LLMs, validating its generality.

\section*{Impact Statement}

This paper presents \ourapproach, a framework that enables LLMs and VLMs to self-improve their internal world models using unlabelled data. By reducing reliance on expensive human-annotated trajectories, our work advances the efficiency and accessibility of training capable reasoning agents. This has positive implications for the development of general-purpose assistants that can better understand physical dynamics and digital environments.

However, we acknowledge potential societal consequences associated with these capabilities. First, our method utilises self-improving loops on unlabelled ``in-the-wild'' data. Without human curation, there is a risk that the model may internalise, reinforce, or amplify biases and harmful patterns present in the raw distribution of web and video data. Future work employing this framework should incorporate safety filters or constitutional AI principles within the reward mechanism to mitigate this risk.

Second, the improvements in Visual Dynamics Prediction and Web HTML interaction imply a step forward in generative video capabilities and autonomous web agents. While these advancements aid in creative tools and automation, they also lower the barrier for generating deepfakes or creating agents capable of bypassing web-based security measures (e.g., CAPTCHAs) or conducting automated interactions at scale. We emphasise the necessity of developing robust detection tools for synthetic media and implementing strict access controls and guardrails for autonomous agents operating in real-world web environments.

\section*{Acknowledgements}
This work is supported by the ERC Starting Grant AToM-FM (101222956) awarded to Edoardo M. Ponti.
The authors acknowledge the use of resources provided by the Isambard-AI National AI Research Resource (AIRR). Isambard-AI is operated by the University of Bristol and is funded by the UK Government’s Department for Science, Innovation and Technology (DSIT) via UK Research and Innovation; and the Science and Technology Facilities Council [ST/AIRR/I-A-I/1023]~\citep{mcintoshsmith2024isambardaileadershipclasssupercomputer}.

\bibliography{example_paper}
\bibliographystyle{icml2026}

\newpage
\appendix
\onecolumn

\input{derivations}

\section{Implementation Details}
\label{appendix:implementation-details}

\subsection{Hyperparameters}
For the \textsc{Liquid-SFT} model, we fine-tune Liquid-7B~\cite{wu2024liquid} on \picobanana~\citep{qian2025picobanana} and \aurora's training set. Training is performed for 5 epochs with a batch size of 128. We use a learning rate of $2\times10^{-5}$ with a cosine learning rate schedule, allocating 5\% of the total training steps for warm-up.

For the non-iterative setup (\textsc{\ourapproach (IDM $\rightarrow$ FWM)}), the model is trained with a batch size of 64 and a learning rate of $1\times10^{-6}$. A cosine learning rate schedule with 100 warm-up steps is applied. GRPO is used with a rollout size of 8 and a KL regularization coefficient $\beta=0.1$.

For the iterative setup (\textsc{\ourapproach (Iterative)}), we use a batch size of 128 and a learning rate of $2\times10^{-7}$, together with a cosine schedule and 50 warm-up steps. In the \textsc{\ourapproach (Iterative + Share)} variant, the learning rate is increased to $5\times10^{-7}$ while all other settings remain unchanged. In both iterative GRPO setups, we set the decoding temperature to 0.75 and Top-$P$ to 0.96 to control the quality of predicted futures. Additionally, we apply a logit processor to constrain Liquid’s generation to image tokens for FWM and textual tokens for IDM.

For LLM experiments on \stabletoolbench, we fine-tune models using GRPO with DeepSpeed for memory-efficient distributed training. Optimization is performed with a learning rate of $5\times10^{-5}$ under a cosine learning rate schedule with 25 warm-up steps, using an effective batch size of 128. For each prompt, we sample 64 rollouts. The maximum prompt and completion lengths are set to 8{,}126 and 4{,}096 tokens, respectively. Unless otherwise specified, we use a KL regularization coefficient of $\beta=0.1$. During GRPO rollout, we set the decoding temperature to 0.7 and Top-$P$ to 0.96. For \mindtoweb and \scienceworld, we adopt the same configuration, except that the maximum prompt and completion lengths are both set to 4{,}096 tokens, and the number of GRPO rollouts is reduced to 16.

For VLM experiments, training is conducted using DeepSpeed on 32 NVIDIA H200 140GB GPUs for FWM and 8 NVIDIA H200 140GB GPUs for IDM. For LLM experiments, both FWM and IDM are trained on 8 NVIDIA Grace-Hopper (GH100) GPUs.

\subsection{Evaluation Prompt for GPT-4o as a Judge}

For \aurorabench, we use the same prompt as in \citet{deng2025emerging}.
For \bytemorph, we employ the official GPT-4o judge template provided by \citet{chang2025bytemorph}.
Since \worldpredictionbench \citep{chen2025worldpredictionbench} does not natively support multi-turn image editing evaluation, we adopt the prompt template shown in Figure~\ref{fig:worldprediction-gpt4o-judge-prompt}. 
This template is designed to balance the penalties for directly copying the source image and for excessive editing.

\begin{figure}[ht]
    \centering
    \begin{tcolorbox}[colback=gray!10, arc=5pt, boxrule=0pt]
    \textbf{WorldPrediction GPT-4o-as-a-Judge Prompt}

    \smallskip
    \footnotesize

    You are evaluating predicted observations for a procedural plan.  
    The plan belongs to dataset \texttt{\{dataset\}}, sample \texttt{\{sample\_uid\}}.  
    Overall plan: \texttt{\{plan\_description\}}.  
    Focus on step \texttt{\{step\_index\}} of \texttt{\{total\_steps\}}, where the action is ``\texttt{\{action\_label\}}''.

    \smallskip

    You are given the source observation (input to the model) and the candidate observation (model output).  
    There is NO ground-truth reference image. 

    \smallskip

    Your task is to judge whether the candidate correctly applies the stated action to the source.
    Do NOT score based on overall visual quality or realism. 

    \smallskip

    Evaluation rules:
    \begin{enumerate}[leftmargin=1.2em]
        \item Infer what *must change* and what *must remain unchanged* given the action. 
        \item Compare candidate against source to check whether the required change is present. 
        \item If the candidate is identical or near-identical to the source when the action implies a visible change, assign a low score.
        \item Penalise unintended changes outside the scope of the action. 
    \end{enumerate}

    Return a JSON object like
    \texttt{\{"score": <0--10>, "explanation": "..."\}}  
    summarising correctness.

    \end{tcolorbox}
    \caption{Prompt template used to instruct GPT-4o to evaluate multi-turn visual dynamics prediction in the WorldPrediction benchmark.}
    \label{fig:worldprediction-gpt4o-judge-prompt}
\end{figure}

\section{Sanity Check Results for General Image Editing.}
\label{appendix:gedit-result}

\begin{table}[h]
\centering
\caption{General image editing performance on \gedit. As a sanity check, we evaluate the image editing performance of our SFT baseline (\textit{Liquid-SFT}) against state-of-the-art image editing models. We report Semantics, Quality, and the Average score. Results indicate that our model maintains functional editing capabilities after our SFT pipeline.}
\label{tab:gedit_sanity_check}
\renewcommand{\arraystretch}{1.15}
\setlength{\tabcolsep}{10pt}
\begin{tabular}{l ccc}
\toprule
\textbf{Models} & \textbf{Semantics} & \textbf{Quality} & \textbf{Average} \\
\midrule
\multicolumn{4}{l}{\textbf{Proprietary}} \\
GPT-4o & 7.85 & \textbf{7.62} & \textbf{7.53} \\
Gemini 2.0 & 6.73 & 6.61 & 6.32 \\
\midrule
\multicolumn{4}{l}{\textbf{Public VLMs}} \\
Emu3.5-33B & \textbf{8.27} & 7.28 & 7.53 \\
BAGEL-14B & 7.36 & 6.83 & 6.52 \\
OmniGen2 & 6.35 & 6.03 & 5.57 \\
OmniGen-v1-diffusers & 5.51 & 5.37 & 4.62 \\
UniWorld-V1 & 5.05 & 6.85 & 4.80 \\
BLIP3o-NEXT-edit-VAE & 3.33 & 5.56 & 3.73 \\
\midrule
Liquid Zero-shot & 0.04	& 1.43 & 0.03 \\
Liquid-SFT & 4.04 & 3.04 & 3.06 \\
Liquid-SFT $w/o$ \textsc{Pico-banana-400k} & 2.92	& 2.87	& 2.43 \\
\bottomrule
\end{tabular}
\end{table}

Table \ref{tab:gedit_sanity_check} reports a sanity-check evaluation of general image editing performance on \gedit. Although Liquid does not have native image editing support by design, we find that after our supervised fine-tuning (SFT) pipeline, using a mixture of \aurora \citep{krojer2024aurora} and \picobanana \citep{qian2025picobanana}, the resulting model (Liquid-SFT) acquires functional image editing behaviour. We illustrate the qualitative examples for general image editing in Figure~\ref{fig:general-edit-examples}.

Concretely, Liquid without SFT indicates a totally failure in \gedit, indicating the necessarily in the warm-up stage before \ourapproach. Liquid-SFT achieves non-trivial scores in both semantic alignment and perceptual quality, confirming that the model can follow image editing instructions and produce coherent visual outputs. While Liquid-SFT remains significantly behind state-of-the-art proprietary and public image editing models, this result is not intended to be competitive; rather, it serves as a capability verification that our SFT pipeline successfully activates basic visual editing skills in a base model not originally designed for this task.

We emphasise that this capability plays an important role as a warm-up stage for subsequent world modelling objectives. In later stages, the model is trained to (i) predict the next visual observation given the current state (forward world modelling), and (ii) perform inverse dynamics modelling (IDM), where the model receives a pair of observations and predicts the intervening action in language. The ability to perform instruction-followed image transformations provides a minimal but necessary foundation for these more structured visual–temporal reasoning tasks.

Finally, the ablation study highlights the importance of \picobanana in our data mixture. Removing this component leads to a consistent degradation across all metrics (average score dropping from 3.06 to 2.43), indicating that Pico-banana provides critical signal for stabilising and enriching visual instruction-following behaviour. This result suggests that even for non-native capabilities, carefully curated visual editing data is essential for preserving functional performance after SFT.

We include the qualitative examples produced by our approach for \textit{general image editing} in Figure~\ref{fig:general-edit-examples}.

\begin{figure}
    \centering
    \includegraphics[width=\linewidth]{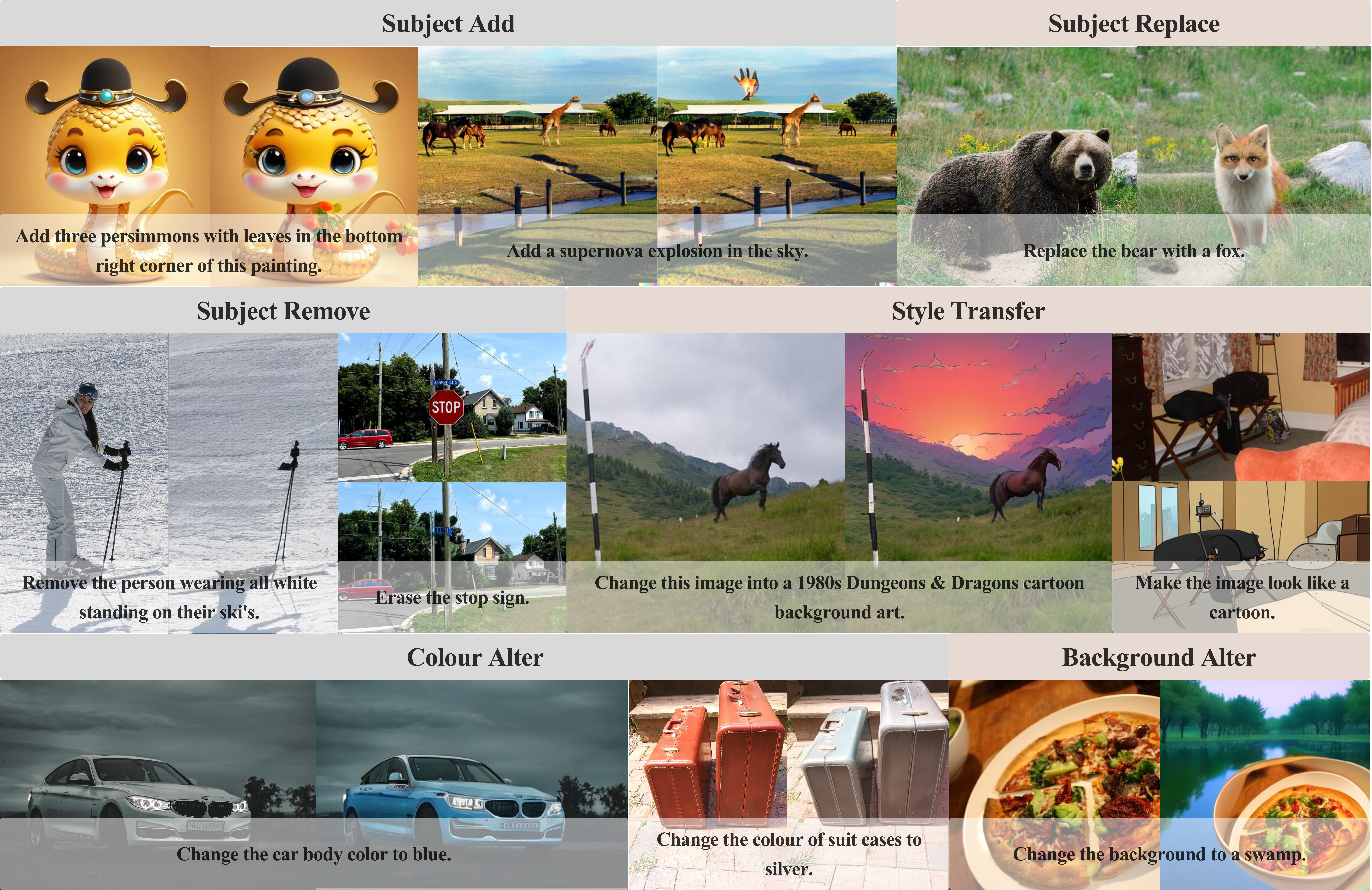}
    \caption{Qualitative of \ourapproach for general image editing including Subject's add/replace/remove, style transfer, altering the colour or background.}
    \label{fig:general-edit-examples}
\end{figure}

\section{IDM Analysis}
\begin{table}[t]
\footnotesize
\centering
\caption{\textbf{\aurorabench Evaluation of IDM.} We compare our proposed method against baselines on captioning/description quality. We report GPT-4 evaluation scores and reference-based metrics (BERTScore, ROUGE-L, BLEU). \colorbox{deepmindblue}{Blue cells} indicate where ours positively improve Liquid-SFT. 
}
\label{tab:idp_results}
\renewcommand{\arraystretch}{1.2}
\setlength{\tabcolsep}{8pt}
\begin{tabular}{l c ccc}
\toprule
\textbf{Method} & \textbf{GPT4o} & \textbf{BERT} & \textbf{R-L} & \textbf{BLEU} \\
\midrule
OmniGen2 & 6.15 & 14.5 & 15.3 & 7.5 \\
Chameleon & --- & \textbf{45.0} & \textbf{44.0} & 20.0 \\
Liquid-AURORA & 4.52 & 31.0 & 31.0 & \textbf{28.0} \\
Liquid-SFT & 6.38 & 41.0 & 36.0 & 21.0 \\
\midrule
\textbf{\ourapproach (FWM $\to$ IDM)} & \cellcolor{deepmindblue}6.38 & 40.1 & \cellcolor{deepmindblue}39.2 & 20.3 \\
\textbf{\ourapproach (Iterative)} & \cellcolor{deepmindblue}\textbf{6.58} & 40.3 & \cellcolor{deepmindblue}40.3 & \cellcolor{deepmindblue}21.1 \\
\textbf{\ourapproach (Iter. + Share)} & \cellcolor{deepmindblue}{6.57} & 40.3 & \cellcolor{deepmindblue}39.1 & \cellcolor{deepmindblue}21.1 \\
\bottomrule
\end{tabular}
\end{table}

\paragraph{\aurorabench Evaluation.} To validate the reliability of our reward mechanism, we evaluate the Inverse Dynamics Model (IDM) performance directly on \aurorabench (Table~\ref{tab:idp_results}). A fundamental premise of our framework is that inferring the action responsible for a state transition (IDM) is a more tractable task than generating high-dimensional future states (FWM). The results support this hypothesis: the IDM model achieves high action prediction accuracy (6.38 out of 10 as GPT score), serving as a strong discriminator. 
Additionally, we observe that the iterative paradigm can also leverage the reward signal from FWM, effectively improving GPT4o scores from 6.38 to 6.58 as our best approach.
This stability is vital for our self-improving, ensuring that the reward signal remains informative and accurately penalises physical inconsistencies.

\begin{figure}
    \centering
    \includegraphics[width=\linewidth]{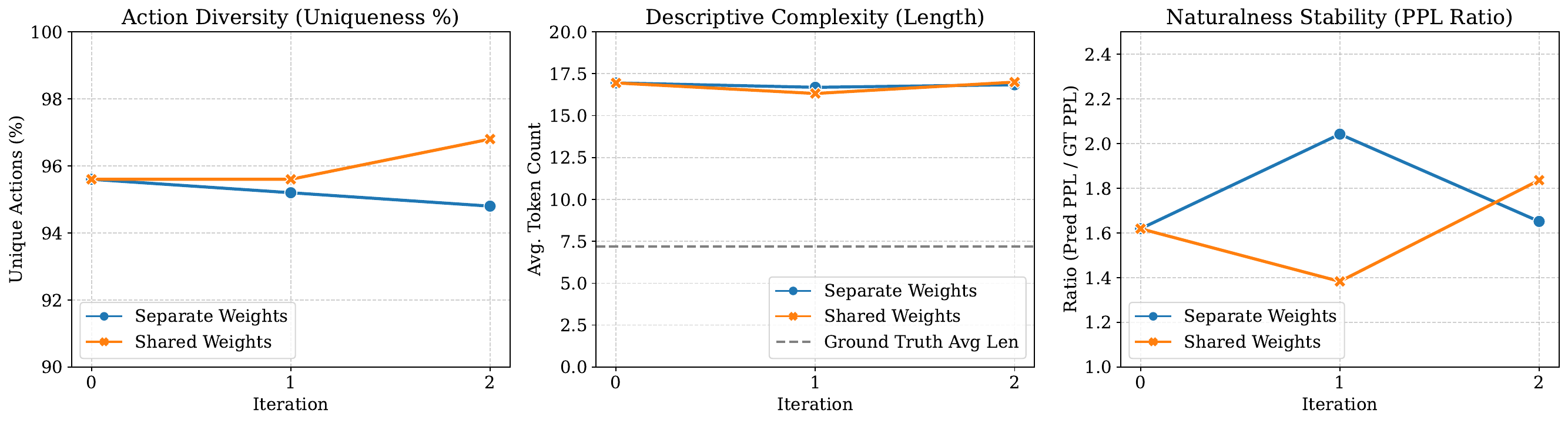}
    \caption{\textbf{Evolution of Latent Action Space throughout \ourapproach.} We analyse the semantic properties of generated actions across iterations. \textit{Uniqueness} indicates the percentage of distinct actions in the generated set. \textit{Length} denotes average token count. \textit{PPL Ratio} denotes the ratio of predicted perplexity from an LLM (specifically, GPT-2) on the predicted action against the ground-truth action, which measures the naturalness of the actions in language form. During \ourapproach, the model maintains high diversity and naturalness without collapsing into short ciphers.}
    \label{fig:idp_analysis}
\end{figure}

\paragraph{Interpretability of Latent Actions and Reward Hacking.} A potential concern in learning with the latent actions is reward hacking, where actions might degenerate into short, distinctive ciphers (e.g., single keywords or artifacts) to be hacked for an easier optimisable FWM. We analyse the evolution of complexities of the generated actions across three iterations for \ourapproach in Figure~\ref{fig:idp_analysis}.

Contrary to the hypothesis of reward collapse, we observe that the uniqueness of generated actions remains consistently high ($>94\%$) across all iterations for both shared and separate weight configurations.
The similar observation happens for naturalness that the predicted actions are quite stably natural (as evidenced by the predicted perplexity of an LLM, GPT-2) throughout \ourapproach's training iterations, without collapsing into any unnatural artifacts that could be easily hacked by FWM or IDM.
Looking into the lengths of predicted actions, the model does not "shortcut" by generating brief tokens as well; instead, the average action length (around $16.8$ tokens) consistently exceeds the ground truth average (around $7.2$ tokens). This indicates that the IDM learns to provide more descriptive and detailed instructions to ensure the transition is consistently identifiable, rather than drifting into a simplified cipher. Qualitative inspection confirms that predictions remain physically meaningful and there is no model collapse happening on \ourapproach (e.g., \textit{"tearing paper into two pieces", "Swap the positions of the two objects.", "turning a bottle upside down"}), preserving natural language interpretability.

\section{Inference-time Verification} 
As a sanity check for \ourapproach, we investigate the validity of the \textit{coverage hypothesis} intrinsic to our GRPO formulation. For the IDM reward to effectively guide learning, the set of sampled rollouts must contain at least one candidate that sufficiently approximates the ground-truth future state. We evaluate the "Best-of-$N$" performance of Liquid-SFT, where the candidate with the highest GPT4o score is selected, by scaling the number of rollouts $N$. In Table~\ref{tab:main_results}, we observe a monotonic improvement for \textsc{inference-time verification} as $N$ increases, confirming that the base policy possesses the latent capability to generate high-fidelity transitions in multiple rollouts. This finding empirically validates our training premise: expanding the rollout space should increases the likelihood of capturing valid transitions, thereby providing the IDM discriminator with the necessary high-quality positive examples to reinforce.

\section{Detailed Results for Comparing SFT and \ourapproach.}
\label{appendix:sft-vs-rl-details}

\begin{table*}[h]
\small
\centering
\caption{\textbf{Detailed Ablation: Data Efficiency (RL vs. SFT).} We report GPT-4o evaluation scores across five benchmarks and the aggregated average. We compare our proposed Reinforcement Learning (RL) fine-tuning against two Supervised Fine-Tuning (SFT) baselines: \textit{SFT-Continue} (continual training with the additional samples) and \textit{SFT-Merge} (we concatenate all samples and fine-tune model). Columns represent the number of training samples seen. \textbf{Bold} indicates the best performance for that specific sample count. Rows highlighted in \colorbox{deepmindblue}{blue} denote our method.}
\label{tab:appendix_ablation}
\renewcommand{\arraystretch}{1.2} %
\setlength{\tabcolsep}{12pt} %
\begin{tabular}{ll cccc}
\toprule
 &  & \multicolumn{4}{c}{\textbf{Number of Training Samples}} \\
\cmidrule(lr){3-6}
\textbf{Benchmark} & \textbf{Method} & \textbf{3.2k} & \textbf{6.4k} & \textbf{9.6k} & \textbf{12.8k} \\
\midrule

\multirow{3}{*}{MagicBrush} 
 & SFT-Continue & 5.50 & 4.94 & 4.70 & 4.56 \\
 & SFT-Merge & \textbf{5.62} & 5.56 & 4.80 & 5.62 \\
 & \cellcolor{deepmindblue}\textbf{\ourapproach (RL)} & \cellcolor{deepmindblue}4.71 & \cellcolor{deepmindblue}\textbf{5.56} & \cellcolor{deepmindblue}\textbf{6.61} & \cellcolor{deepmindblue}\textbf{6.19} \\
\cmidrule{1-6}

\multirow{3}{*}{Action Genome} 
 & SFT-Continue & 2.62 & 2.72 & 2.82 & 2.88 \\
 & SFT-Merge & \textbf{2.76} & 2.98 & 2.94 & 2.58 \\
 & \cellcolor{deepmindblue}\textbf{\ourapproach (RL)} & \cellcolor{deepmindblue}2.60 & \cellcolor{deepmindblue}\textbf{3.32} & \cellcolor{deepmindblue}\textbf{3.67} & \cellcolor{deepmindblue}\textbf{3.38} \\
\cmidrule{1-6}

\multirow{3}{*}{Something} 
 & SFT-Continue & 2.90 & 2.86 & 2.70 & 2.74 \\
 & SFT-Merge & 2.92 & 2.54 & 2.66 & 2.56 \\
 & \cellcolor{deepmindblue}\textbf{\ourapproach (RL)} & \cellcolor{deepmindblue}\textbf{3.42} & \cellcolor{deepmindblue}\textbf{3.31} & \cellcolor{deepmindblue}\textbf{3.10} & \cellcolor{deepmindblue}\textbf{3.38} \\
\cmidrule{1-6}

\multirow{3}{*}{Whatsup} 
 & SFT-Continue & 3.18 & 3.69 & 3.16 & 3.51 \\
 & SFT-Merge & 2.69 & 2.62 & 2.74 & 2.86 \\
 & \cellcolor{deepmindblue}\textbf{\ourapproach (RL)} & \cellcolor{deepmindblue}\textbf{3.80} & \cellcolor{deepmindblue}\textbf{4.26} & \cellcolor{deepmindblue}\textbf{4.04} & \cellcolor{deepmindblue}\textbf{4.12} \\
\cmidrule{1-6}

\multirow{3}{*}{Kubric} 
 & SFT-Continue & 5.81 & 5.51 & 5.90 & 5.79 \\
 & SFT-Merge & 6.68 & 6.66 & \textbf{6.69} & 6.49 \\
 & \cellcolor{deepmindblue}\textbf{\ourapproach (RL)} & \cellcolor{deepmindblue}\textbf{6.90} & \cellcolor{deepmindblue}\textbf{6.70} & \cellcolor{deepmindblue}6.51 & \cellcolor{deepmindblue}\textbf{6.65} \\
\midrule[\heavyrulewidth]

\multirow{3}{*}{\textbf{Average (All)}} 
 & SFT-Continue & 3.98 & 3.94 & 3.84 & 3.88 \\
 & SFT-Merge & 4.15 & 4.07 & 3.95 & 4.01 \\
 & \cellcolor{deepmindblue}\textbf{\ourapproach (RL)} & \cellcolor{deepmindblue}\textbf{4.27} & \cellcolor{deepmindblue}\textbf{4.63} & \cellcolor{deepmindblue}\textbf{4.67} & \cellcolor{deepmindblue}\textbf{4.73} \\
\bottomrule
\end{tabular}
\end{table*}

Table \ref{tab:appendix_ablation} presents a detailed comparison between \ourapproach and two supervised fine-tuning baselines under controlled data budgets. Across all five benchmarks and all training sample counts, RL consistently exhibits superior data efficiency compared to both \textit{SFT-Continue} and \textit{SFT-Merge}. While SFT baselines often plateau or even degrade as more samples are introduced, RL shows a clear trend of performance improvement as additional data becomes available.

Notably, RL begins to outperform SFT at relatively small data scales (6.4K samples) and the performance gap widens as training proceeds.  On Action Genome, Something, and Whatsup, Kubric, which emphasises structured physical dynamics, RL achieves the strongest or near-strongest performance at every scale, indicating that reward-guided optimisation better captures latent world structure than direct imitation.

Aggregated across all benchmarks, RL achieves the highest average score at every data scale, with gains becoming increasingly pronounced at larger budgets (4.27 → 4.73). These results suggest that RL not only improves final performance but also utilises limited data more effectively, aligning with our hypothesis that IDM's rewarding mechanism provides a stronger inductive bias for intrinsic world modeling than purely supervised objectives.

\section{Detailed Results for \worldpredictionbench}
\label{appendix:detailed-worldpredictionbench-result}

\setlength{\tabcolsep}{5pt} %
\renewcommand{\arraystretch}{1.1} %

Table~\ref{tab:appendix_world_prediction_full} presents a granular analysis of predictive performance across six consecutive autoregressive turns on \textit{WorldPredictionBench}. This evaluation creates a challenging stress test for temporal consistency, as errors generated in early turns compound over the predictive horizon.

We observe two primary trends. First, as expected, all models exhibit performance decay as the horizon increases ($N$ decreases as tasks are completed or fail). However, a clear divergence emerges between training paradigms. The direct supervised baseline (\textit{Liquid-SFT}) suffers from rapid degradation, dropping from an overall score of 3.09 in Turn 1 to 1.17 by Turn 4. In contrast, our proposed \ourapproach demonstrate significantly improved robustness against this compounding error. The \textit{Ours (Best)} configuration, maintains a score of 1.59 at Turn 4. This indicates that the reciprocal cycle does not merely memorise single-step transitions but internalises more robust physical dynamics that persist over long-horizon rollouts. While large-scale unified models like \textit{Bagel} provide a high upper bound, our method close the gap compared to SFT baseline.
 
\begin{longtable}{l cccccc}
\caption{\textbf{Long-Horizon Evaluation on WorldPredictionBench.} We report GPT-4o evaluation scores across 6 consecutive prediction turns. We compare our proposed method variants against the direct SFT baseline (Liquid-SFT), the strong unified baseline (Bagel), and other state-of-the-art VLMs.  Rows highlighted in \colorbox{deepmindblue}{blue} denote our methods. The sample count ($N$) decreases in later turns as completed tasks are filtered out. \textbf{Bold} indicates the best performance among the Liquid-based family (SFT vs. Ours).}
\label{tab:appendix_world_prediction_full} \\

\toprule
\textbf{Method} & \textbf{COIN} & \textbf{CrossTask} & \textbf{EgoExo} & \textbf{EPIC} & \textbf{IKEA} & \textbf{Overall} \\
\midrule
\endfirsthead

\multicolumn{7}{c}%
{{\bfseries \tablename\ \thetable{} -- continued from previous page}} \\
\toprule
\textbf{Method} & \textbf{COIN} & \textbf{CrossTask} & \textbf{EgoExo} & \textbf{EPIC} & \textbf{IKEA} & \textbf{Overall} \\
\midrule
\endhead

\midrule
\multicolumn{7}{r}{{Continued on next page}} \\
\bottomrule
\endfoot

\bottomrule
\endlastfoot

\multicolumn{7}{l}{\cellcolor{headergray}\textbf{Turn 1} ($N=400$)} \\
\midrule
Bagel & 4.71 & 4.46 & 3.55 & 3.42 & 4.27 & 4.29 \\
Liquid-SFT & 3.37 & 3.41 & 2.75 & 3.20 & 2.57 & 3.09 \\
\rowcolor{deepmindblue} Ours (Iterative) & \textbf{3.60} & \textbf{3.44} & \textbf{3.11} & \textbf{3.30} & 2.48 & \textbf{3.23} \\
\rowcolor{deepmindblue} Ours (Iter.+Share) & 3.11 & 3.93 & 2.70 & 2.88 & 2.41 & 2.95 \\
\rowcolor{deepmindblue} Ours (Best) & 3.54 & 3.27 & 2.75 & 3.20 & \textbf{2.61} & 3.16 \\
\midrule
\multicolumn{7}{l}{\textit{Other Baselines}} \\
OmniGen2 & 3.44 & 3.59 & 2.26 & 2.73 & 2.67 & 3.05 \\
BLIP3o-NEXT & 2.65 & 2.76 & 2.15 & 2.33 & 2.20 & 2.46 \\
OmniGen-v1 & 3.41 & 3.46 & 2.64 & 4.35 & 2.73 & 3.25 \\
UniWorld-V1 & 3.26 & 3.59 & 3.23 & 2.83 & 3.89 & 3.40 \\

\midrule
\multicolumn{7}{l}{\cellcolor{headergray}\textbf{Turn 2} ($N=400$)} \\
\midrule
Bagel & 4.37 & 4.34 & 3.75 & 3.38 & 4.04 & 4.10 \\
Liquid-SFT & 2.42 & 2.37 & 1.83 & 2.08 & 1.52 & 2.09 \\
\rowcolor{deepmindblue} Ours (Iterative) & 2.25 & 1.90 & 2.06 & 2.25 & 1.78 & 2.08 \\
\rowcolor{deepmindblue} Ours (Iter.+Share) & 2.13 & 2.07 & \textbf{2.28} & 2.25 & 1.65 & 2.04 \\
\rowcolor{deepmindblue} Ours (Best) & \textbf{2.66} & \textbf{2.90} & 2.11 & \textbf{2.78} & \textbf{1.79} & \textbf{2.42} \\
\midrule
\multicolumn{7}{l}{\textit{Other Baselines}} \\
OmniGen2 & 3.13 & 2.80 & 2.09 & 2.38 & 2.08 & 2.64 \\
BLIP3o-NEXT & 2.05 & 1.80 & 1.58 & 1.65 & 1.60 & 1.82 \\
OmniGen-v1 & 2.99 & 3.22 & 2.34 & 3.55 & 2.37 & 2.83 \\
UniWorld-V1 & 3.25 & 3.49 & 3.06 & 3.38 & 3.86 & 3.41 \\

\midrule
\multicolumn{7}{l}{\cellcolor{headergray}\textbf{Turn 3} ($N=400$)} \\
\midrule
Bagel & 3.96 & 3.27 & 2.85 & 3.30 & 3.04 & 3.47 \\
Liquid-SFT & 1.46 & 1.27 & 1.42 & \textbf{1.75} & 1.17 & 1.40 \\
\rowcolor{deepmindblue} Ours (Iterative) & 1.61 & 1.39 & 1.62 & 1.65 & 1.33 & 1.53 \\
\rowcolor{deepmindblue} Ours (Iter.+Share) & 1.65 & 1.51 & 1.75 & 1.25 & 1.31 & 1.53 \\
\rowcolor{deepmindblue} Ours (Best) & \textbf{2.02} & \textbf{2.02} & \textbf{1.96} & 1.73 & \textbf{1.46} & \textbf{1.85} \\
\midrule
\multicolumn{7}{l}{\textit{Other Baselines}} \\
OmniGen2 & 4.16 & 4.34 & 3.30 & 4.00 & 4.14 & 4.05 \\
BLIP3o-NEXT & 1.25 & 1.05 & 1.06 & 1.25 & 1.24 & 1.20 \\
OmniGen-v1 & 4.05 & 3.95 & 3.81 & 2.93 & 4.48 & 4.00 \\
UniWorld-V1 & 3.57 & 3.90 & 3.25 & 2.93 & 4.35 & 3.68 \\

\midrule
\multicolumn{7}{l}{\cellcolor{headergray}\textbf{Turn 4} ($N=210$)} \\
\midrule
Bagel & 4.21 & 3.70 & 2.53 & 3.88 & 2.64 & 3.22 \\
Liquid-SFT & 1.20 & 1.20 & 1.29 & 1.32 & 1.04 & 1.17 \\
\rowcolor{deepmindblue} Ours (Iterative) & 1.40 & 1.30 & 1.38 & 1.41 & 1.23 & 1.32 \\
\rowcolor{deepmindblue} Ours (Iter.+Share) & 1.38 & \textbf{1.60} & 1.40 & 1.15 & 1.10 & 1.24 \\
\rowcolor{deepmindblue} Ours (Best) & \textbf{2.20} & \textbf{1.60} & \textbf{1.43} & \textbf{1.71} & \textbf{1.32} & \textbf{1.59} \\
\midrule
\multicolumn{7}{l}{\textit{Other Baselines}} \\
OmniGen2 & 4.40 & 3.40 & 3.29 & 3.18 & 3.95 & 3.75 \\
BLIP3o-NEXT & 1.00 & 0.70 & 1.00 & 1.06 & 0.89 & 0.95 \\
OmniGen-v1 & 4.00 & 3.30 & 3.79 & 3.21 & 4.31 & 3.92 \\
UniWorld-V1 & 3.00 & 2.70 & 3.83 & 2.44 & 4.32 & 3.59 \\

\midrule
\multicolumn{7}{l}{\cellcolor{headergray}\textbf{Turn 5} ($N=140$)} \\
\midrule
Bagel & --- & --- & 2.47 & 3.26 & 2.11 & 2.47 \\
Liquid-SFT & --- & --- & 1.23 & 1.30 & 0.91 & 1.07 \\
\rowcolor{deepmindblue} Ours (Iterative) & --- & --- & 1.26 & 1.17 & 1.09 & 1.15 \\
\rowcolor{deepmindblue} Ours (Iter.+Share) & --- & --- & \textbf{1.57} & \textbf{1.27} & 1.05 & 1.23 \\
\rowcolor{deepmindblue} Ours (Best) & --- & --- & 1.54 & 1.07 & \textbf{1.19} & \textbf{1.25} \\
\midrule
\multicolumn{7}{l}{\textit{Other Baselines}} \\
OmniGen2 & --- & --- & 2.91 & 2.83 & 3.21 & 3.06 \\
BLIP3o-NEXT & --- & --- & 0.91 & 1.03 & 0.71 & 0.83 \\
OmniGen-v1 & --- & --- & 3.57 & 2.57 & 3.43 & 3.28 \\
UniWorld-V1 & --- & --- & 3.29 & 2.10 & 3.40 & 3.09 \\

\midrule
\multicolumn{7}{l}{\cellcolor{headergray}\textbf{Turn 6} ($N=115$)} \\
\midrule
Bagel & --- & --- & 2.13 & 3.00 & 1.95 & 2.23 \\
Liquid-SFT & --- & --- & 1.04 & 1.17 & 0.88 & 0.97 \\
\rowcolor{deepmindblue} Ours (Iterative) & --- & --- & 1.12 & \textbf{1.39} & 1.02 & 1.11 \\
\rowcolor{deepmindblue} Ours (Iter.+Share) & --- & --- & 1.23 & 1.17 & \textbf{1.05} & \textbf{1.11} \\
\rowcolor{deepmindblue} Ours (Best) & --- & --- & \textbf{1.23} & 1.22 & 0.97 & 1.08 \\
\midrule
\multicolumn{7}{l}{\textit{Other Baselines}} \\
OmniGen2 & --- & --- & 2.54 & 2.96 & 3.39 & 3.11 \\
BLIP3o-NEXT & --- & --- & 0.73 & 0.91 & 3.55 & 0.63 \\
OmniGen-v1 & --- & --- & 3.04 & 2.22 & 4.05 & 3.45 \\
UniWorld-V1 & --- & --- & 2.58 & 1.74 & 0.50 & 2.97 \\

\end{longtable}

\section{Detailed Results for Iterative Results}
\label{appendix:detailed-iterative-result}

\begin{table*}[h]
\small
\centering
\caption{\textbf{Detailed Iterative Dynamics (FWM \& IDM).} We report the detailed breakdown of performance across three iterations for our two ablation settings: (1) Separate weights for Generator and Critic, and (2) Shared weights. Columns under ``FWM Evaluation'' represent the generation quality across five benchmarks. The ``IDM Score'' column represents the Critic's alignment accuracy. \textbf{Bold} numbers indicate the peak performance within each experimental setting.}
\label{tab:appendix_iterative}
\renewcommand{\arraystretch}{1.2}
\setlength{\tabcolsep}{8pt} %
\begin{tabular}{ll ccccc cc}
\toprule
 & & \multicolumn{6}{c}{\textbf{FWM}} & \multicolumn{1}{c}{\textbf{IDM}} \\
\cmidrule(lr){3-8} \cmidrule(lr){9-9}
\textbf{Setting} & \textbf{Iteration} & \textbf{MagicBrush} & \textbf{AG} & \textbf{Something} & \textbf{Whatsup} & \textbf{Kubric} & \textbf{Avg} & \textbf{Avg Score} \\
\midrule

\multicolumn{9}{l}{{\textbf{Option 1: Separate Weights (No Sharing)}}} \\
 & Iter 0 & 6.24 & 3.36 & 3.72 & 4.32 & \textbf{7.14} & 4.96 & 6.37 \\
 & Iter 1 & \textbf{6.62} & \textbf{3.52} & \textbf{3.96} & \textbf{4.32} & 6.96 & \textbf{5.06} & 6.52 \\
 & Iter 2 & 6.42 & 3.44 & 3.82 & 4.20 & 6.96 & 4.98 & \textbf{6.56} \\
\addlinespace[0.5em]

\multicolumn{9}{l}{\textbf{Option 2: Shared Weights ($\phi = \theta$)}} \\
 & Iter 0 & 6.24 & 3.36 & 3.72 & 4.32 & 7.14 & 4.96 & 6.37 \\
 & Iter 1 & 6.00 & \textbf{3.38} & \textbf{3.73} & \textbf{4.46} & \textbf{7.45} & \textbf{5.00} & \textbf{6.57} \\
 & Iter 2 & \textbf{6.38} & 3.18 & \textbf{3.73} & 4.32 & 7.27 & 4.97 & 6.52 \\
\bottomrule
\end{tabular}
\end{table*}
Table \ref{tab:appendix_iterative} examines the iterative training dynamics of Forward World Modeling (FWM) and Inverse Dynamics Modeling (IDM) under two architectural choices: using separate weights for the FWM and IDM versus shared weights. We report results across three training iterations to highlight peak performance and learning progression.

The separate-weight configuration achieves the highest peak FWM performance, with Iteration 1 attaining the best average FWM score (5.06) and consistently strong results across all benchmarks. In particular, gains on MagicBrush, Action Genome, and Something indicate that decoupling the FWM and IDM allows each component to specialise more effectively, leading to higher-quality forward predictions when the system is optimally aligned.

While the shared-weight setup exhibits competitive and stable behaviour, it does not surpass the peak generative performance achieved by the separate-weight model. Notably, although IDM accuracy continues to improve slightly in later iterations for both settings, the separate-weight design reaches its optimal balance between FWM quality and IDM alignment earlier in training. This suggests that architectural decoupling enables more expressive forward modeling, even if later iterations yield diminishing returns.

Overall, these results indicate that separating FWM and IDM parameters is advantageous for maximising peak forward world modelling performance, motivating our choice of the separate-weight configuration in the main experiments where peak capability is the primary objective.

\section{Ablation on GRPO Rollout Size.}

\begin{table}[t]
\caption{\textbf{Ablation on GRPO Rollout Size ($G$).} We report the peak \textit{Assemble} score achieved across all checkpoints for each rollout configuration. While larger group sizes ($G=64$) achieve the best performance, moderate group sizes ($G=16$) achieve comparable  performance with much less compute.}
\label{tab:rollout_ablation}
\centering
\begin{small}

\begin{tabular}{lcccccc}
\toprule
\textbf{$\#$Rollout} & \textbf{MB} & \textbf{AG} & \textbf{Something} & \textbf{Whatsup} & \textbf{Kubric} & \textbf{Avg.} \\
\midrule
$G=8$  & 6.04	&	3.38	&	3.45	&	4.26	&	6.98	&	4.80 \\
$G=16$ & 5.78	&	3.24	&	3.60	&	4.50	&	6.98	&	4.80 \\
$G=32$ & 5.72	&	3.30	&	3.37	&	4.18	&	6.80	&	4.68 \\
$G=64$ & 5.98	& 3.50	& 3.80	& 4.16	& 7.06	& 4.90 \\
\bottomrule
\end{tabular}
\end{small}
\end{table}

We analyse the effect of rollout size. The group size $G$ is a critical hyperparameter in GRPO, as it governs the accuracy of the baseline estimation and the diversity of the exploration within each optimisation step. We empirically evaluate $G \in \{8, 16, 32, 64\}$ across \aurorabench, with results summarised in Table \ref{tab:rollout_ablation}. We observe that scaling the rollout size generally improves model performance, achieving a peak average score of 4.90 at $G=64$. We attribute this to the reduced variance in advantage estimation: a larger pool of generations provides a more robust approximation of the expected return, enabling the policy to distinguish high-quality trajectories more effectively. Notably, while $G=64$ yields the best absolute performance—particularly in complex environments like Kubric, the smaller configuration of $G=16$ remains highly competitive (4.80 avg.) while requiring much less memory and compute during the generation phase.

\section{Qualitative Examples}
\label{appendix:qualitative-examples}

We provide some qualitative examples in Figure~\ref{fig:qualitative-examples} to illustrate the visual predictions of \ourapproach across three distinct benchmarks: \aurorabench, \bytemorph, and \worldpredictionbench.

\textbf{\aurorabench.} The top panel demonstrates results on the AURORA benchmark, covering various subsets such as MagicBrush, Something, Emu, Kubric, and Whatsup. These examples highlight the model's capability to perform diverse action-centric image editing tasks, ranging from the general editing (e.g., Add a \texttt{supernova explosion in the sky}) and background replacement (e.g., \texttt{changing to Yellowstone National Park}) to precise geometric transformations and spatial reasoning. For instance, in the Something and Kubric subsets, the model successfully executes actions concerning physical regularities like \texttt{flip the bottle upside down} and distinct spatial rearrangements like \texttt{moving a specific plate to the left}.

\textbf{\bytemorph.} The middle panel displays qualitative results for \bytemorph, focusing on fine-grained control over camera and object dynamics. We visualise three distinct categories: Camera Zoom, Camera Motion, and Object Motion. The results show the model's ability to synthesise coherent view changes, such as zooming out to reveal context (e.g., \textit{the coastline and a second boat}) or shifting the camera angle upward. Additionally, the Object Motion example demonstrates the generation of localised movement, specifically raising an animal's head while maintaining scene consistency.

\textbf{\worldpredictionbench.} The bottom panel illustrates long-horizon predictions on the \worldpredictionbench. Here, we show multi-step predictions where the model generates subsequent states based on the previous predicted image as the context and textual action descriptions. The examples depict long-horizon consistency in procedural tasks, such as replacing a car key battery (Start → Put in battery → Close cover) and arranging bedding (Start → Take out cover → Arrange nicely).
 
\begin{figure}
    \centering
    \includegraphics[width=\linewidth]{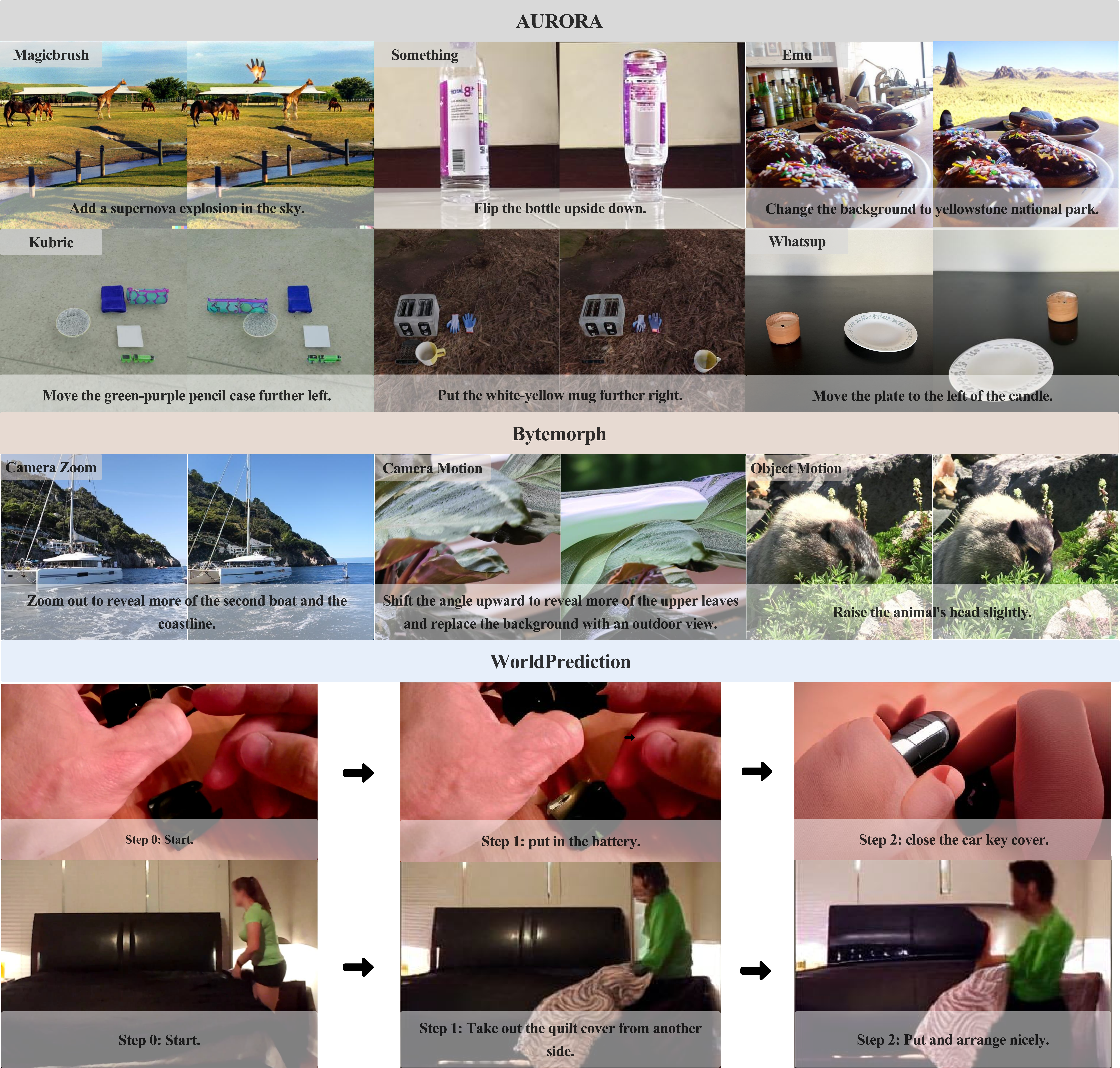}
    \caption{Qualitative examples produced by \ourapproach for \aurorabench, \bytemorph and \worldpredictionbench. We present the sample for each subset in these benchmarks.}
    \label{fig:qualitative-examples}
\end{figure}

\end{document}

%% file: macros.tex
\usepackage{hyperref}
\usepackage{url}
\usepackage{natbib}
\usepackage{comment,xspace,color}

\usepackage[utf8]{inputenc} %
\usepackage[T1]{fontenc}    %
\usepackage{hyperref}       %
\usepackage{url}            %
\usepackage{booktabs}       %
\usepackage{amsfonts}       %
\usepackage{nicefrac}       %
\usepackage{microtype}      %
\usepackage{multirow}   %
\usepackage{booktabs}   %
\usepackage{graphicx}   %
\usepackage{siunitx} %
\usepackage{tcolorbox}
\usepackage{array}
\usepackage{rotating}  %
\usepackage{adjustbox}
\usepackage{booktabs}
\usepackage{siunitx}
\usepackage{wrapfig}
\usepackage{graphicx}
\usepackage{subcaption}
\usepackage{tabularx}
\usepackage{colortbl}

\usepackage{xcolor}
\usepackage{multirow}
\usepackage{amssymb} %
\usepackage{algorithm}
\usepackage{amsmath}

\usepackage{makecell}

\definecolor{deepmindblue}{HTML}{e6f0ff} %
\definecolor{badred}{HTML}{FFB3B3}       %
\definecolor{goodgreen}{HTML}{388e3c}    %
\definecolor{grey}{HTML}{808080}    %

\definecolor{headergray}{HTML}{F9F9F9}

\usepackage{makecell}

\newcommand{\ourapproach}{\textsc{SWIRL}\xspace}
\newcommand{\aurora}{\textsc{Aurora}\xspace}
\newcommand{\aurorabench}{\textsc{Aurora-Bench}\xspace}
\newcommand{\worldpredictionbench}{\textsc{WorldPredictionBench}\xspace}
\newcommand{\gedit}{\textsc{GEdit-Bench}\xspace}
\newcommand{\bytemorph}{\textsc{ByteMorph}\xspace}
\newcommand{\vidgen}{\textsc{VidGen-1M}\xspace}
\newcommand{\picobanana}{\textsc{Pico-banana-400K}\xspace}
\newcommand{\scienceworld}{\textsc{ScienceWorld}\xspace}
\newcommand{\stabletoolbench}{\textsc{StableToolBench}\xspace}
\newcommand{\mindtoweb}{\textsc{Mind2Web}\xspace}

%% file: derivations.tex
\section{Theoretical Derivation}
\label{appendix:theory}

In this section, we provide the detailed theoretical justification for \ourapproach. We formalise the interaction between the Forward World Modelling (FWM) and Inverse Dynamics Model (IDM) as an alternating maximisation of \textit{Identifiability} (via Variational Mutual Information) and \textit{Data Fidelity} (via the Evidence Lower Bound).

We define the following notation:
\begin{itemize}
    \item $x \in \mathcal{S}$: The source observation (current state $s_t$).
    \item $y \in \mathcal{S}$: The ground-truth target observation (next state $s_{t+1}$), distributed according to the data distribution $\mathcal{D}(y|x)$.
    \item $\hat{y} \in \mathcal{S}$: A generated target observation sampled from the forward model.
    \item $z \in \mathcal{A}$: The latent action driving the transition.
\end{itemize}

We employ two parametrised models:
\begin{enumerate}
    \item \textbf{Forward World Modelling (FWM):} $P_\theta(\hat{y} | x, z)$, parametrised by $\theta$. FWM parametrised the environment’s transition distribution.
    \item \textbf{Inverse Dynamics Model (IDM):} $Q_\phi(z | x, y)$, parametrised by $\phi$. IDM parametrised an approximate posterior over latent actions given a state transition.
\end{enumerate}

\subsection{Phase I: Learnability of the Forward Model (FWM)}
\label{sec:fdp_phase_proof}

In the first phase, we freeze the IDM parameters $\phi$ and optimise the FWM parameters $\theta$. Our objective is to generate trajectories $\hat{y}$ that are \textit{ identifiable} by the inference model. We formalise this as maximising the \textbf{Conditional Mutual Information} (CMI) between the latent action $Z$ and the generated observation $\hat{Y}$, conditioned on the source state $X$.

Crucially, in our algorithm, the latent actions $Z$ used for training are not sampled from a fixed uninformative prior, but are inferred from the ground-truth data using the current IDM. We denote this \textit{Empirical Belief Distribution} as $\tilde{P}(z|x)$:
\begin{equation}
    \tilde{P}(z|x) \triangleq \mathbb{E}_{y \sim \mathcal{D}(y|x)} \left[ Q_\phi(z | x, y) \right]
\end{equation}

\begin{theorem}
\label{thm:fdp_opt}
Optimising the FWM to maximise the likelihood assigned by the frozen IDM to generated samples maximises a Variational Lower Bound of the Conditional Mutual Information $I_{\tilde{P}}(Z; \hat{Y} | X)$ defined over the empirical belief distribution.
\end{theorem}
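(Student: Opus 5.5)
The argument is the Barber--Agakov variational bound, applied to the joint distribution $\mathcal{D}(x)\,\tilde{P}(z|x)\,P_\theta(\hat{y}|x,z)$. Under this joint, the true posterior $P_\theta(z|x,\hat{y})$ is well defined by Bayes' rule. It is proportional to $\tilde{P}(z|x)P_\theta(\hat{y}|x,z)$, with normaliser $P_\theta(\hat{y}|x)=\sum_z \tilde{P}(z|x)P_\theta(\hat{y}|x,z)$. I would first write the conditional mutual information as
\[
I_{\tilde{P}}(Z;\hat{Y}|X)=H_{\tilde{P}}(Z|X)-H_\theta(Z|\hat{Y},X).
\]
The key observation is that $H_{\tilde{P}}(Z|X)=-\mathbb{E}_{x}\mathbb{E}_{z\sim\tilde{P}(\cdot|x)}\log\tilde{P}(z|x)$ involves only $\mathcal{D}$ and the frozen $\phi$. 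It is therefore a constant $C_\phi$ with respect to $\theta$. All dependence on $\theta$ sits in the term $-H_\theta(Z|\hat{Y},X)=\mathbb{E}[\log P_\theta(z|x,\hat{y})]$, where the expectation is over the joint above.

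Next comes the variational step. For each fixed $(x,\hat{y})$ I would add and subtract $\log Q_\phi(z|x,\hat{y})$ inside the expectation. This gives
\[
\mathbb{E}_{z\sim P_\theta(\cdot|x,\hat y)}[\log P_\theta(z|x,\hat{y})]=\mathbb{E}_{z\sim P_\theta(\cdot|x,\hat y)}[\log Q_\phi(z|x,\hat{y})]+D_{\mathrm{KL}}\bigl(P_\theta(\cdot|x,\hat{y})\,\|\,Q_\phi(\cdot|x,\hat{y})\bigr).
\]
Since the KL term is nonnegative, averaging over $(x,\hat{y})\sim \mathcal{D}(x)P_\theta(\hat{y}|x)$ yields
\[
I_{\tilde{P}}(Z;\hat{Y}|X)\ge C_\phi+\mathbb{E}_{x}\mathbb{E}_{z\sim\tilde{P}(\cdot|x)}\mathbb{E}_{\hat{y}\sim P_\theta(\cdot|x,z)}[\log Q_\phi(z|x,\hat{y})].
\]
To get this form I re-factor the joint from posterior times marginal back into prior times forward model. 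The bound is tight exactly when $Q_\phi$ matches the true posterior.

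Then I would unfold $\tilde{P}$ by linearity of expectation. Since $\tilde{P}(z|x)=\mathbb{E}_{y\sim\mathcal{D}(y|x)}Q_\phi(z|x,y)$, the right-hand side equals $C_\phi+\mathcal{J}(\theta)$. Here $\mathcal{J}(\theta)$ is exactly the objective of Eq.~\ref{eq:final_fwm_obj}: sample $(x,y)$, infer $z\sim Q_\phi(\cdot|x,y)$, roll out $\hat{y}\sim P_\theta(\cdot|x,z)$, and score with $\log Q_\phi(z|x,\hat{y})$. Because $C_\phi$ is constant in $\theta$, any increase of $\mathcal{J}$ increases the lower bound on the CMI by the same amount, and this is the claimed statement. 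To connect with Algorithm~\ref{algo:ourapproach}, I would finally invoke the log-derivative trick. The reward $\log Q_\phi(z|x,\hat{y})$ does not depend on $\theta$, so $\nabla_\theta\mathcal{J}=\mathbb{E}[\log Q_\phi(z|x,\hat{y})\,\nabla_\theta\log P_\theta(\hat{y}|x,z)]$. The GRPO update of Eq.~\ref{eq:grpo_fwm} estimates this gradient direction, since group-relative baselines keep it unbiased up to scaling, setting aside the standard-deviation normalisation.

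The main point needing care is the variational step. The posterior in the entropy term has to be the Bayes posterior under the $\tilde{P}$-induced joint, not under some other prior. This is what makes the entropy $H_{\tilde P}(Z|X)$ genuinely $\theta$-independent. If the posterior were taken under a different prior, the constant would depend on $\theta$ through the marginal and the argument would not go through.

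I would also remark on what "maximises a lower bound" means here. It means ascending the bound, not the CMI itself. The gap is $\mathbb{E}\,D_{\mathrm{KL}}(P_\theta(z|x,\hat{y})\|Q_\phi(z|x,\hat{y}))$, and this gap may itself change with $\theta$. The statement as phrased only asserts the bound, which avoids that issue. Measurability and finiteness of $\log Q_\phi$ would be assumed, for example with a discrete, token-level action space and full support.
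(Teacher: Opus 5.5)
Your proposal is correct and follows the paper's proof essentially step for step. You split the CMI into the $\theta$-independent $H_{\tilde P}(Z|X)$ minus $H(Z|\hat Y,X)$, lower-bound $-H(Z|\hat Y,X)$ using nonnegativity of $D_{\mathrm{KL}}(P(z|x,\hat y)\,\|\,Q_\phi(z|x,\hat y))$, and unfold $\tilde P$ to recover $\mathcal{J}(\theta)$; your log-derivative/GRPO step mirrors the main-text version of the proof. One small quibble about your closing remark: $H_{\tilde P}(Z|X)$ is $\theta$-independent simply because the $z$-marginal of the joint is $\tilde P(z|x)$ for every $\theta$, whereas taking the Bayes posterior under that joint is what makes $H_{\tilde P}(Z|X)-H(Z|\hat Y,X)$ exactly the CMI being bounded.
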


\begin{proof}
The Conditional Mutual Information under the joint distribution induced by the data and the frozen IDM is:
\begin{equation}
    I_{\tilde{P}}(Z; \hat{Y} | X) = H_{\tilde{P}}(Z | X) - H(Z | \hat{Y}, X)
\end{equation}

The term $H_{\tilde{P}}(Z | X)$ is the entropy of the marginal distribution of actions inferred from the dataset. Since $\phi$ is frozen in this phase and the dataset $\mathcal{D}$ is fixed, $\tilde{P}(z|x)$ is constant with respect to $\theta$. Therefore, maximising the mutual information is equivalent to minimizing the conditional entropy $H(Z | \hat{Y}, X)$.

The conditional entropy is defined as:
\begin{align}
    H(Z | \hat{Y}, X) &= -\mathbb{E}_{x \sim \mathcal{D}} \mathbb{E}_{z \sim \tilde{P}(z|x)} \mathbb{E}_{\hat{y} \sim P_\theta(\cdot|x,z)} \left[ \log P(z | \hat{y}, x) \right]
\end{align}

The true posterior $P(z | \hat{y}, x)$ is intractable as it requires marginalising over the action space. We utilise the frozen IDM $Q_\phi(z | x, \hat{y})$ as a variational approximation. By the non-negativity of the KL divergence $D_{\mathrm{KL}}(P(z|\hat{y},x) || Q_\phi(z|x,\hat{y})) \ge 0$, we have the lower bound:
\begin{equation}
    \mathbb{E}_{\hat{y}}[\log P(z | \hat{y}, x)] \ge \mathbb{E}_{\hat{y}}[\log Q_\phi(z | x, \hat{y})]
\end{equation}

Substituting this into the entropy term yields the variational lower bound for the CMI:
\begin{align}
    I_{\tilde{P}}(Z; \hat{Y} | X) &\ge H_{\tilde{P}}(Z|X) + \mathbb{E}_{x \sim \mathcal{D}} \mathbb{E}_{z \sim \tilde{P}(z|x)} \mathbb{E}_{\hat{y} \sim P_\theta(\cdot|x,z)} \left[ \log Q_\phi(z | x, \hat{y}) \right]
\end{align}

Expanding the definition of $\tilde{P}(z|x)$ from Eq.~(1), the optimization objective becomes:
\begin{equation}
    \mathcal{J}_{\text{FWM}}(\theta) = \mathbb{E}_{x \sim \mathcal{D}} \mathbb{E}_{y \sim \mathcal{D}(y|x)} \left[ \mathbb{E}_{z \sim Q_\phi(\cdot|x,y)} \left[ \mathbb{E}_{\hat{y} \sim P_\theta(\cdot | x, z)} \left[ \log Q_\phi(z | x, \hat{y}) \right] \right] \right]
\end{equation}

\end{proof}

\subsection{Phase II: Learnability of the Inverse Model (IDM)}
\label{sec:idp_phase}

In the second phase, we freeze $\theta$ and optimise the IDM parameters $\phi$. We seek to maximise the log-likelihood of the observed ground-truth dynamics $\log P_\theta(y|x)$ (Data Fidelity). We model this via the Evidence Lower Bound (ELBO), treating the initialised policy at the start of the iteration as the reference prior, denoted $\pi_{\mathrm{ref}}(z|x)$.

\begin{theorem}
\label{thm:elbo}
Optimising the IDM via the KL-regularised policy gradient objective with reward $R(x,z,y) = \log P_\theta(y|x,z)$, $\beta = 1$, and reference policy $\pi_{\mathrm{ref}}$ maximises the Evidence Lower Bound (ELBO).
\end{theorem}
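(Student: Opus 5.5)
The plan is to show that the KL-regularised policy-gradient objective and the ELBO are the same functional of $\phi$, once $P_\theta$ is frozen and the prior is identified with $\pi_{\mathrm{ref}}$. The proof then splits into two steps: (i) derive the ELBO as a lower bound on $\log P_\theta(y|x)$; (ii) match it term by term with $\mathcal{J}(\phi)$.

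\textbf{Step (i): the ELBO.} Fix a transition $(x,y)$. Define the marginal under the informative prior by
\[
P_\theta(y|x)=\sum_{z}P_\theta(y|x,z)\,\pi_{\mathrm{ref}}(z|x),
\]
with an integral in place of the sum if $\mathcal{A}$ is continuous. Multiply and divide inside the sum by $Q_\phi(z|x,y)$; this is legitimate on the support of $Q_\phi$, and I assume $Q_\phi \ll \pi_{\mathrm{ref}}$ so the KL term below is finite. This rewrites the marginal as an expectation under $Q_\phi$. Jensen's inequality for the concave $\log$ then gives
\[
\log P_\theta(y|x)\ \ge\ \mathbb{E}_{z\sim Q_\phi(\cdot|x,y)}\bigl[\log P_\theta(y|x,z)\bigr]-D_{\mathrm{KL}}\bigl(Q_\phi(\cdot|x,y)\,\|\,\pi_{\mathrm{ref}}(\cdot|x)\bigr)\triangleq\mathcal{L}_{\mathrm{ELBO}}(\phi).
\]
Equivalently, the gap equals $D_{\mathrm{KL}}(Q_\phi\|P_\theta(z|x,y))\ge 0$, where the true posterior is taken under the prior $\pi_{\mathrm{ref}}$. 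This second form also shows that maximising $\mathcal{L}_{\mathrm{ELBO}}$ over $\phi$ tightens the bound, because the left-hand side does not depend on $\phi$.

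\textbf{Step (ii): identification.} Write the KL-regularised objective for a generic coefficient $\beta$:
\[
\mathcal{J}_\beta(\phi)=\mathbb{E}_{z\sim Q_\phi}[R(x,z,y)]-\beta D_{\mathrm{KL}}(Q_\phi\|\pi_{\mathrm{ref}}).
\]
Substituting $R=\log P_\theta(y|x,z)$ and $\beta=1$ makes $\mathcal{J}_1(\phi)=\mathcal{L}_{\mathrm{ELBO}}(\phi)$ identically in $\phi$. Averaging over $(x,y)\sim\mathcal{D}$ preserves the equality, so every ascent step on $\mathcal{J}$ is an ascent step on the dataset ELBO with $\theta$ held fixed. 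That is one coordinate-ascent step, consistent with Theorem~\ref{thm:fdp_opt} handling the $\theta$-coordinate.

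\textbf{Step (iii): the gradient estimator.} I will check that the score-function gradient of $\mathcal{J}$ is what GRPO estimates. This needs the identity $\nabla_\phi\mathbb{E}_{Q_\phi}[R]=\mathbb{E}_{Q_\phi}[R\,\nabla_\phi\log Q_\phi]$, which holds because $R$ does not depend on $\phi$ once $\theta$ is frozen. The group-mean baseline leaves this expectation unbiased, since $\mathbb{E}_{Q_\phi}[\nabla_\phi\log Q_\phi]=0$.

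\textbf{Main obstacle.} The delicate point is the mismatch between the prior and the reference policy in the conditioning. The ELBO needs a prior $P(z|x)$ that does not depend on $y$. In the algorithm, however, the reference model is the IDM checkpoint at the start of the iteration, which is prompted with $(x,y)$. I would handle this by stating explicitly that $\pi_{\mathrm{ref}}(\cdot|x)$ is defined as a $y$-independent distribution, namely the prior, and by noting that with a $y$-dependent reference the same computation still yields a valid lower bound on a $y$-indexed surrogate evidence. A secondary caveat concerns GRPO itself: its per-token KL estimator and its standard-deviation-normalised advantages only approximate $\mathcal{J}$. The claim should therefore be stated for the exact objective $\mathcal{J}$, with GRPO as a (biased) estimator of it.
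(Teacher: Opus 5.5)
Your proposal is correct and takes essentially the same route as the paper. You write $\log P_\theta(y|x)$ as a marginal over $\pi_{\mathrm{ref}}(z|x)$, importance-weight by $Q_\phi(z|x,y)$, apply Jensen to get $\mathbb{E}_{Q_\phi}[\log P_\theta(y|x,z)] - D_{\mathrm{KL}}(Q_\phi\|\pi_{\mathrm{ref}})$, and identify this with $\mathcal{J}(\phi)$ when $R=\log P_\theta(y|x,z)$ and $\beta=1$, exactly as the paper does. Your additions are sound refinements that the paper leaves implicit: the gap as $D_{\mathrm{KL}}$ to the true posterior, the point that the prior must be $y$-independent even though the reference IDM checkpoint conditions on $(x,y)$, and the caveat that GRPO only approximates $\mathcal{J}$. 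One small nuance concerns your step (iii): a group-mean baseline that includes the sample itself scales the expected score-function gradient by $(G-1)/G$, so it is not exactly unbiased, whereas a leave-one-out mean would be.
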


\begin{proof}
We express the marginal log-likelihood of the data by introducing the variational distribution $Q_\phi(z | x, y)$:
\begin{align}
    \log P_\theta(y | x) &= \log \sum_{z \in \mathcal{A}} P_\theta(y | x, z) \pi_{\mathrm{ref}}(z|x) \nonumber \\
    &= \log \mathbb{E}_{z \sim Q_\phi(\cdot|x,y)} \left[ \frac{P_\theta(y | x, z) \pi_{\mathrm{ref}}(z|x)}{Q_\phi(z | x, y)} \right]
\end{align}

Applying Jensen's inequality (concavity of log):
\begin{align}
    \log P_\theta(y | x) &\ge \mathbb{E}_{z \sim Q_\phi} \left[ \log P_\theta(y | x, z) + \log \pi_{\mathrm{ref}}(z|x) - \log Q_\phi(z | x, y) \right] \nonumber \\
    &= \mathbb{E}_{z \sim Q_\phi} \left[ \log P_\theta(y | x, z) \right] - D_{\mathrm{KL}}(Q_\phi(z | x, y) \,||\, \pi_{\mathrm{ref}}(z|x))
\end{align}

This is the standard Evidence Lower Bound. The KL-regularised policy gradient objective (Phase II) is defined as:
\begin{equation}
    \mathcal{J}(\phi) = \mathbb{E}_{z \sim Q_\phi(\cdot|x,y)} \left[ R(x, z, y) \right] - \beta \, D_{\mathrm{KL}}(Q_\phi(\cdot|x,y) \,||\, \pi_{\mathrm{ref}}(\cdot|x))
\end{equation}

By setting the reward to the FWM log-likelihood, $R = \log P_\theta(y | x, z)$, we observe that:
\begin{equation}
    \mathcal{J}(\phi) \equiv \text{ELBO}(\phi)
\end{equation}

Thus, the IDM update step performs coordinate ascent on the evidence lower bound of the data likelihood, ensuring the inferred actions explain the ground-truth transitions under the current forward dynamics.
\end{proof}